\documentclass{article}
\usepackage{slp_preprint,times}
\usepackage[hidelinks,hypertexnames=false]{hyperref}
\usepackage{url}
\usepackage{amsmath,amssymb,amsthm,mathtools}
\usepackage{booktabs,longtable,tabularx,array,multirow,makecell}
\usepackage{microtype}
\usepackage{graphicx}
\usepackage{xcolor}
\usepackage{float}
\usepackage{tikz}
\usepackage{orcidlink}
\usetikzlibrary{positioning,arrows.meta,calc,fit}
\usepackage[T1]{fontenc}
\newcolumntype{Y}{>{\raggedright\arraybackslash}X}
\newcommand{\R}{\mathbb{R}}

\newcommand{\Cset}{\mathcal{C}}
\newcommand{\Xset}{\mathcal{X}}
\newcommand{\Zset}{\mathcal{Z}}
\newcommand{\Yset}{\mathcal{Y}}
\newcommand{\push}{\mathbin{\#}}
\newcommand{\E}{\mathbb{E}}

\newcommand{\SLPProbHard}{\textmd{\texttt{SLP-ProbHard}}}
\newcommand{\SFLP}{\textmd{\textit{SFLP}}}
\newcommand{\pospart}[1]{\left[#1\right]_+}
\newcommand{\psiplus}{\psi_{+}}
\newcommand{\psibm}{\psi_{\mathrm{bm}}}
\newcommand{\psibr}{\psi_{\mathrm{br}}}
\newcommand{\psiint}{\psi_{\mathrm{int}}}
\DeclareMathOperator{\Cov}{Cov}
\DeclareMathOperator{\Var}{Var}
\DeclareMathOperator{\softplus}{softplus}

\newtheorem{theorem}{Theorem}
\newtheorem{proposition}{Proposition}
\newtheorem{corollary}{Corollary}
\newtheorem{remark}{Remark}

\title{%
  \raggedright\normalfont\LARGE
  \texttt{SLP-ProbHard}: Probabilistic Hard-Constrained Learning via
  Structural Latent Parameterization
}

\author{%
  Wondesen Teshome Bekele\textsuperscript{1,2,*}\,
  \orcidlink{0000-0002-3827-8919}
  \qquad
  Marco D'Oria\textsuperscript{1}\,
  \orcidlink{0000-0002-5154-7052}\\[4pt]
  \normalfont\small
  \textsuperscript{1}Department of Engineering and Architecture,
  University of Parma, Parma, Italy\\
  \normalfont\small
  \textsuperscript{2}University School for Advanced Studies
  IUSS Pavia, Pavia, Italy\\[2pt]
  \normalfont\small
  \textsuperscript{*}Corresponding author:
  \href{mailto:wondesenteshome.bekele@unipr.it}
       {wondesenteshome.bekele@unipr.it}
}
\slpfinalcopy
\hypersetup{pdftitle={SLP-ProbHard: Probabilistic Hard-Constrained Learning via Structural Latent Parameterization}, pdfauthor={Wondesen Teshome Bekele; Marco D'Oria}}
\begin{document}
\maketitle
\lhead{Preprint}
\rhead{}

\begin{abstract}
Many probabilistic predictors must satisfy exact structure in every stochastic realization, yet common hard-constraint approaches form predictions in ambient coordinates and subsequently complete, correct, or project them. We introduce \SLPProbHard{}, a cross-family representation-centered framework for probabilistic hard-constrained learning when suitable explicit structural parameterizations are available. Its core modelling object is a Structural Feasible Latent Parameterization (\SFLP{}), consisting of a structural latent law \(Z\sim P_\theta^Z(\cdot\mid x)\) and an explicit feasible map \(Y=h_\phi(x,Z)\), with \(h_\phi(x,z)\in\mathcal C(x)\) for every \(z\in\mathcal Z\). Together, the structural latent law and feasible map define the predictive model itself rather than a final feasibility wrapper. The latent law and map jointly determine predictive support and boundary probability; the chosen coordinates and map specify the stochastic representation and shape attainable dependence, calibration, expressiveness, and computation. The formulation does not prescribe a particular predictive backbone or latent distribution family; the present experiments instantiate it with Gaussian latent laws and fixed geometry-derived maps. We instantiate the formulation across representative affine equalities, convex structural sets (ordering, simplex, and nonnegative ordering), explicit nonlinear equalities/manifolds (circle and scale-shape), and three structural views of the same seven-basin hydrological FDC data. A verified official-source ProbHardE2E/DPPL affine comparison gives zero practical violations for both methods: \SLPProbHard{} uses 8 rather than 11 stochastic coordinates and improves MSE/MAE, whereas DPPL attains better marginal CRPS and higher, closer-to-nominal coverage; the paired test does not detect an Energy Score difference across ten seeds. Real-world affine and nonlinear FDC representations reduce 13 to 7 and 14 ambient to 8 computational coordinates, respectively. Together, these results show that exact feasibility alone does not determine a predictive law and motivate direct structural generation when scientifically meaningful feasible coordinates are available.
\end{abstract}

\section{Introduction}
Probabilistic models are increasingly used where uncertainty matters but output structure is non-negotiable. In these settings feasibility must hold not only for a predictive mean but for every stochastic realization. Examples include conservation and aggregation identities, non-negativity, ordering, simplex constraints, explicit nonlinear equalities, composed constraints, and initial or boundary conditions.

A common strategy is to predict in an ambient space and then enforce feasibility through a soft penalty, post-processing rule, optimization layer, or differentiable projection/correction~\citep{amos2017,agrawal2019,donti2021}. Recent methods broaden this route to convex, semidefinite, nonlinear equality, and nonlinear inequality constraints~\citep{tordesillas2023,grontas2025,tang2026,goertzen2026,chu2026,wang2026diffslack}; ProbHardE2E is the closest dedicated probabilistic projection framework~\citep{utkarsh2025}. Projection is broadly useful, especially when a feasible set is implicit or difficult to parameterize.

\SLPProbHard{} asks the complementary question: \emph{when a scientifically meaningful generative representation is available, can the predictive distribution be defined directly in those feasible coordinates?} We use
\begin{equation}
Z \sim P_\theta^Z(\cdot \mid x), \qquad
Y = h_\phi(x,Z), \qquad
h_\phi(x,z) \in \mathcal{C}(x)
\quad \forall z \in \mathcal{Z}.
\end{equation}
so the output law is the pushforward $P^Y_\theta=h_{\phi,x}\push P^Z_\theta$. The structural map is therefore not merely a final feasibility wrapper: together with the latent law it specifies which feasible outputs can occur, which boundary events can receive probability, and how many stochastic coordinates are actually modeled.

\paragraph{Novelty scope.}
\SLPProbHard{} introduces a cross-family representation-centered framework in which a structural latent law and an explicit feasible map jointly define the predictive law. Its methodological contribution is to formalize and test how this representation changes feasible support and boundary probability, stochastic-coordinate structure, uncertainty propagation, attainable dependence/covariance, expressiveness, calibration, and computation. Thus, two constructions can satisfy the same hard constraint while defining different predictive-law families. The individual feasibility constructions used here, null-space coordinates, nonnegative increments, normalization, trigonometric coordinates, and the associated pushforward identities, use established mathematical primitives. Important structural-probabilistic precedents also exist in hierarchical forecasting, general linear constraints, hard linear generative learning, and domain-specific stochastic ordering ~\citep{olivares2024clover, girolimetto2024general,li2026hardlinear, bekele2026smthcsd}. The contribution of \SLPProbHard{} is not these primitives individually, but their organization as a common probabilistic model-design methodology across representative affine, convex-structural, and explicit nonlinear constraint geometries.

\paragraph{Contributions.}
We make three connected contributions.

\textbf{(1) Predictive-law abstraction.} We introduce \SLPProbHard{} and \SFLP{} as its core modelling abstraction: a structural latent predictive law and an explicit feasible map jointly define a sample-wise hard-feasible pushforward law.

\textbf{(2) Representation audit and consequences.} We characterize and empirically validate how \SFLP{} choice affects support and boundary probability, intrinsic versus computational stochastic dimension, uncertainty propagation, attainable dependence/covariance, calibration, parameterization bias, and computation across representative constraint geometries.

\textbf{(3) Cross-family empirical validation.} We evaluate the framework on affine equality, ordering, simplex, composed nonnegative ordering, explicit nonlinear manifold/scale--shape problems, and three representations of the same hydrological FDC data. A verified official-source ProbHardE2E/DPPL comparison on the compatible affine benchmark demonstrates shared practical hard feasibility but different stochastic representations and representation-dependent predictive trade-offs.

\section{Related Work}

\textbf{Projection and correction.}
A major branch predicts in an ambient space and enforces feasibility through
differentiable optimization, completion/correction, radial maps, or projection
layers
\citep{amos2017,agrawal2019,donti2021,liang2023,liang2024,schneider2026}.
ProbHardE2E~\citep{utkarsh2025} is the closest dedicated probabilistic method in
this branch: it constructs hard-constrained predictive laws through differentiable
probabilistic projection.

\textbf{Direct feasible parameterization.}
Another branch constructs outputs directly in feasible coordinates, including
null-space and inequality parameterizations, convex feasible layers, and hard-constrained network architectures
\citep{frerix2020,tordesillas2023,min2024,nguyen2025}.
A domain-specific probabilistic precedent is SMT-HCSD
\citep{bekele2026smthcsd}, which generates stochastically ordered temperature outputs using a nonnegative increment.

\textbf{Structural and probabilistic constrained laws.}
Several works are particularly close to the probabilistic structural viewpoint.
CLOVER~\citep{olivares2024clover} constructs coherent probabilistic forecasts by
construction through a differentiable structural factor representation for hierarchical forecasting.
Girolimetto and Di Fonzo~\citep{girolimetto2024general} develop a
structural-like formulation that separates free and constrained variables for
general linearly constrained multiple time series and extend it to probabilistic
forecast reconciliation. \citet{li2026hardlinear} develop
probabilistically sound deep generative learning under hard linear equality
constraints.

\textbf{Position of \SLPProbHard{}.}
These precedents address hierarchical forecasting (CLOVER), general linear time-series constraints (Girolimetto--Di Fonzo), hard linear generative learning (Li et al.), and stochastic temperature ordering (SMT-HCSD). The present study brings these structural-probabilistic perspectives into a common representation audit spanning affine, ordering, simplex, and explicit nonlinear geometries. \SLPProbHard{} is organized around a complementary cross-family question: \emph{when the feasible structural representation itself is chosen as the
coordinate system of the predictive law, how does that choice affect support and
boundary probability, stochastic-coordinate structure, uncertainty propagation,
expressiveness/calibration, and computation?}
Section~\ref{sec:results} gives the compatible official-source
ProbHardE2E/DPPL affine-adapter comparison, while
Appendix~\ref{app:related} provides the broader mechanism-level comparison.

\begin{figure}[t]
\centering
\resizebox{\linewidth}{!}{%
\begin{tikzpicture}[
  font=\scriptsize,
  box/.style={
    draw=black!55,
    rounded corners=1.4pt,
    align=center,
    inner sep=2.0pt,
    minimum height=5.7mm,
    fill=white
  },
  method/.style={
    font=\bfseries\scriptsize,
    anchor=east,
    align=right,
    text width=20mm
  },
  corr/.style={box,fill=red!5},
  proj/.style={box,fill=green!7},
  probproj/.style={box,fill=blue!7},
  slp/.style={box,fill=violet!7},
  outbox/.style={box,fill=black!3},
  scopebox/.style={
    box,
    fill=black!2,
    font=\tiny,
    minimum width=31mm,
    minimum height=14mm
  },
  arr/.style={
    -{Latex[length=1.5mm]},
    thick,
    draw=black!75
  }
]

\node[font=\scriptsize\bfseries,anchor=west]
at (0,0.55)
{A. Where hard feasibility enters the predictive model};

\node[method] (m1) at (0.0,-0.05) {DC3};
\node[corr,right=1.8mm of m1] (d1)
{predicted\\ output};
\node[corr,right=2.2mm of d1] (d2)
{completion +\\ correction};
\node[outbox,right=2.2mm of d2] (d3)
{$y\in\Cset$};

\draw[arr] (d1) -- (d2);
\draw[arr] (d2) -- (d3);

\node[font=\tiny,anchor=west,right=2mm of d3]
{deterministic correction};

\node[method] (m2) at (0.0,-0.86) {$\Pi$net};
\node[proj,right=1.8mm of m2] (p1)
{network\\ output};
\node[proj,right=2.2mm of p1] (p2)
{orthogonal\\ projection layer};
\node[outbox,right=2.2mm of p2] (p3)
{$y\in\Cset$};

\draw[arr] (p1) -- (p2);
\draw[arr] (p2) -- (p3);

\node[font=\tiny,anchor=west,right=2mm of p3]
{deterministic projection};

\node[method] (m3) at (0.0,-1.67) {ProbHardE2E};
\node[probproj,right=1.8mm of m3] (h1)
{ambient law\\ $P_\theta(\widetilde{Y}\mid x)$};
\node[probproj,right=2.2mm of h1] (h2)
{DPPL\\ probabilistic projection};
\node[outbox,right=2.2mm of h2] (h3)
{feasible law\\ $P_\theta(Y\mid x)$};

\draw[arr] (h1) -- (h2);
\draw[arr] (h2) -- (h3);

\node[font=\tiny,anchor=west,right=2mm of h3]
{projected predictive law};

\node[method] (m4) at (0.0,-2.85) {\SLPProbHard{}};
\node[slp,right=1.8mm of m4] (s1)
{latent law\\ $P_\theta^Z(Z\mid x)$};
\node[slp,right=2.2mm of s1] (s2)
{Feasible map\\ $Y=h_\phi(x,Z)$};
\node[outbox,right=2.2mm of s2] (s3)
{feasible law\\ $P_\theta^Y$};

\draw[arr] (s1) -- (s2);
\draw[arr] (s2) -- (s3);

\node[font=\tiny,anchor=west,right=2mm of s3]
{structural feasible pushforward};

\node[
  draw=violet!70!black,
  rounded corners=2pt,
  thick,
  inner sep=1mm,
  fit=(s1)(s2),
  label={
    [font=\tiny,text=violet!70!black,fill=white,inner sep=1pt]
    above:{$\mathrm{SFLP}=(P_\theta^Z,h_\phi)$}
  }
] (sflpgroup) {};

\node[font=\tiny,align=center,below=1.1mm of s2]
{same hard-feasibility requirement; different modeling object and
induced predictive law};

\node[font=\scriptsize\bfseries,anchor=west]
at (0,-3.95)
{B. Constraint structures and real-world representations evaluated here};

\node[scopebox,anchor=north west] (b1) at (0,-4.15)
{\textbf{Linear equality}\\
 Synthetic: $Ay=b$\\
 Real-world: affine FDC\\
 Hard ref.: Projection-Or\\
 + \textbf{DPPL adapter}};

\node[scopebox,right=2.2mm of b1] (b2)
{\textbf{Convex structural}\\
 Synthetic: order + simplex\\
 Real-world: nonnegative FDC\\
 Hard refs.: isotonic /\\
 simplex projection};

\node[scopebox,right=2.2mm of b2] (b3)
{\textbf{Explicit nonlinear}\\
 Synthetic: unit circle\\
 Real-world: FDC scale-shape\\
 Hard refs.: radial /\\
 alternating correction};

\end{tikzpicture}%
}

\caption{\textbf{\SLPProbHard{} positioning and empirical scope.}
\textbf{Panel A} contrasts representative locations at which hard
feasibility enters learning: DC3 uses completion/correction~\citep{donti2021},
$\Pi$net uses orthogonal projection layers~\citep{grontas2025}.
At the probabilistic level, ProbHardE2E/DPPL and \SLPProbHard{}/\SFLP{} are parallel hard-constraint paradigms: DPPL constructs a feasible law by projecting an ambient predictive law~\citep{utkarsh2025}, whereas an \SFLP{} defines the feasible law directly as the pushforward of its structural latent law. DC3 and $\Pi$net are shown for conceptual positioning
only; the direct official-source empirical comparison in this paper is
with ProbHardE2E/DPPL on the compatible affine benchmark.
\textbf{Panel B} summarizes the representative linear, convex-structural,
and explicit nonlinear geometries evaluated synthetically and through
three structural views of the same hydrological FDC data.}

\label{fig:concept}
\end{figure}
\section{\SLPProbHard{}}
\subsection{Structural feasible predictive laws}

Let $x\in\mathcal X$, $y\in\mathcal Y\subseteq\mathbb R^d$, let $\mathcal Z$ be the latent space, and let $\mathcal C(x)$ be the sample-level feasible set. Within \SLPProbHard{}, an \SFLP{} is the pair formed by a structural latent predictive law and a feasible structural map:
\[
Z\sim P_\theta^Z(\cdot\mid x),\qquad Y=h_\phi(x,Z),\qquad
h_\phi(x,z)\in\mathcal C(x)\quad\forall z\in\mathcal Z.
\]
The induced predictive law is $P_\theta^Y=h_{\phi,x}\push P_\theta^Z$. Together, the latent law and map determine the family of feasible predictive distributions that can be represented. In all reported experiments, $h_\phi$ is fixed by known constraint geometry; learnable structural maps are not evaluated here.

When an output density is inconvenient, proper sample-based scores provide a generic training interface~\citep{gneiting2007,jordan2019}. Reparameterized sampling permits samplewise backpropagation; interpreting this as a gradient of the expected score requires the regularity conditions discussed in Appendix~\ref{app:pathwise}~\citep{kingma2014,figurnov2018}.

For affine equalities $Ay=b$, choose a feasible point $y_0$ and full null-space basis $N\in\R^{d\times q}$ with $q=d-m$, $m=\operatorname{rank}(A)$, $Ay_0=b$, and $AN=0$:
\begin{equation}
Y=y_0+NZ,\qquad Z\in\R^q.\label{eq:null}
\end{equation}
Here $q$ is both the intrinsic and computational stochastic dimension. Other maps may be redundant, so we keep intrinsic degrees of freedom separate from computational latent dimension.

\begin{table}[t]
\centering\tiny
\setlength{\tabcolsep}{2.1pt}
\renewcommand{\arraystretch}{1.08}
\begin{tabularx}{\linewidth}{p{0.14\linewidth}p{0.18\linewidth}p{0.18\linewidth}p{0.19\linewidth}Y}
\toprule
\textbf{Tested geometry} & \textbf{Synthetic benchmark} & \textbf{Real-world representation} & \textbf{Primary SFLP} & \textbf{Hard comparator}\\
\midrule
Linear equality & hierarchy $Ay=b$ & affine FDC coherence & $y_0+Nz$; affine $(Q,\operatorname{diff}Q)$ & orthogonal projection; \textbf{DPPL official-source adapter} (synthetic only)\\
Convex structural instances & weak order; simplex & nonnegative FDC ordering & cumulative / normalized positive-part & isotonic / simplex projection\\
Explicit nonlinear equality/manifold & unit circle & FDC scale-shape $\Delta_i=RP_i$ & angle / composed scale-shape & radial projection / alternating feasible correction\\
\bottomrule
\end{tabularx}
\caption{Empirical constraint coverage. These experiments evaluate \SLPProbHard{} on representative structures from the listed geometry groups. The direct official-source DPPL comparison is restricted to the synthetic affine benchmark.}
\label{tab:maps-main}
\end{table}

\begin{center}
\fbox{\parbox{0.95\linewidth}{
\textbf{Algorithm 1: \SLPProbHard{} - design, train, and generate.}

\textbf{A. SFLP design/audit:}\\
(1) specify the sample-level feasible set $\Cset(x)$;\\
(2) choose structural coordinates and $h_\phi$;\\
(3) audit the \SFLP{}: verify $h_\phi(x,\Zset)\subseteq\Cset(x)$ and record
(i) map image and coverage/surjectivity limitations;
(ii) boundary preimages and whether they have zero or positive probability under the selected latent law;
(iii) intrinsic dimension $q_{\rm int}$ and computational latent dimension $q_{\rm comp}$;
(iv) sensitivity, redundancy, and induced dependence/covariance restrictions;
(v) density/Jacobian tractability where applicable; and
(vi) whether generation requires a separate optimization-based projection or correction stage.

\textbf{B. Probabilistic training:}\\
(4) predict latent parameters $\eta_\theta(x)$;\\
(5) draw reparameterized
$z^{(s)}\sim P^Z_\theta(\cdot\mid x)$;\\
(6) construct
$\widehat y^{(s)}=h_\phi(x,z^{(s)})$;\\
(7) compute the selected score or likelihood and its training gradient,
with the estimator qualifications in Appendix~\ref{app:pathwise}.\\
\textbf{C. Inference:}\\
(8) sample $Z$;\\
(9) apply $h_\phi$ and return feasible predictive samples without a
separate correction stage.

\textit{Scope:} If no useful explicit \SFLP{} is available,
\SLPProbHard{} does not assume that structural parameterization is
preferable to projection/correction.
}}
\end{center}
This audit makes representation choice operational: feasible constructions can differ in coverage, boundary probability, stochastic coordinates, induced law families, and computation.
\subsection{Representation-induced probabilistic consequences}
The \SFLP{} audit exposes probabilistic consequences beyond sample-level feasibility, using three standard mathematical facts. First, if $h_x(\Zset)\subseteq\Cset(x)$, then the pushforward satisfies $P^Y_\theta(\Cset(x)\mid x)=1$. Second, rank-nullity gives $q=d-m$ free coordinates for a consistent affine system. Third, for any measurable event $B$,
\begin{equation}
P^Y(B)=P^Z\!\left(h^{-1}(B)\right).\label{eq:boundary}
\end{equation}

\paragraph{Two representation-level consequences.}
Equation~\eqref{eq:boundary} makes boundary probability representation-dependent:
a scientifically meaningful boundary event can be excluded, reachable with zero
probability, or receive positive predictive mass according to its latent
preimage. A second axis is stochastic representation: we distinguish the
intrinsic degrees of freedom $q_{\mathrm{int}}$ from the computational latent
dimension $q_{\mathrm{comp}}$. They coincide for affine null-space coordinates
and the circle angle map, need not decrease for ordering, and differ in the
boundary-capable simplex and nonlinear FDC implementations. Dimensional economy
is therefore map-dependent rather than a universal property of
\SLPProbHard{}.

The representation-level probabilistic analysis asks what a chosen map does \emph{beyond} feasibility. Using standard truncated-Gaussian calculations~\citep{tallis1961}, for $R=\pospart{Z}$ and $Z\sim\mathcal N(\mu,\sigma^2)$, with $\sigma>0$ and $\alpha=\mu/\sigma$,
\begin{equation}
\Pr(R=0)=\Phi(-\alpha),\qquad
\E[R]=\sigma\varphi(\alpha)+\mu\Phi(\alpha).\label{eq:main-pospart}
\end{equation}
For the following product formulas, assume conditionally independent
$Z_i\mid x\sim\mathcal N(\mu_i(x),\sigma_i^2(x))$ with $\sigma_i(x)>0$;
conditioning on $x$ is suppressed below.
The cumulative weak-order map gives
$\Pr(Y_k=Y_{k-1}\mid x)=\Phi(-\mu_k/\sigma_k)$ for $k\ge2$, while nonnegative
cumulative ordering gives
$\Pr(Y_k=0\mid x)=\prod_{j\le k}\Phi(-\mu_j/\sigma_j)$.
For the normalized positive-part simplex map, define the random active
set $\mathcal A=\{i:Z_i>0\}$. For any nonempty index set $S$,
\begin{equation}
\Pr(\mathcal A=S\mid x)
=
\prod_{i\in S}\Phi(\mu_i/\sigma_i)
\prod_{j\notin S}\Phi(-\mu_j/\sigma_j).\label{eq:main-simplex-active}
\end{equation}
Conditional on $\mathcal A=S\ne\varnothing$, the output lies in the
relative interior of the face indexed by $S$. The all-inactive event
is assigned to a vertex by the fallback in Appendix~\ref{app:simplex-map}, adding mass
to vertex outcomes.
 For $Z\sim\mathcal N(\mu,\sigma^2)$ and $Y=(\cos Z,\sin Z)$, $\E[\cos Z]=e^{-\sigma^2/2}\cos\mu$ and $\E[\sin Z]=e^{-\sigma^2/2}\sin\mu$. These formulas, their second moments, and covariance expressions are collected in Appendix~\ref{app:dist-propagation}; here they expose representation-specific probability allocation. The same analysis records whether the stochastic coordinates are intrinsic (affine, circle), full-dimensional (ordering), or computationally redundant (normalized simplex and nonlinear FDC).

\subsection{Structural-map design considerations}
Structural-map choice introduces trade-offs beyond feasibility. We use three practical design considerations to characterize these choices; they are interpretive considerations rather than a universal optimization criterion. Coherence asks whether the map enforces the full stated structure jointly, sample by sample. Sensitivity describes how latent variation propagates through the map, including Jacobian structure, flat regions, saturation, and boundary atoms. Simplicity concerns explicitness, latent redundancy, and whether an additional correction or solve stage is necessary. For the positive-part map $h(z)=\pospart{z}$, $h'(z)=\mathbf 1_{\{z>0\}}$ almost everywhere, so under $Z\sim\mathcal N(\mu,\sigma^2)$,
\begin{equation}
\E[h'(Z)]=\Pr(Z>0)=\Phi(\mu/\sigma)=1-\Pr\!\left(h(Z)=0\right).
\label{eq:sensitivity-boundary}
\end{equation}
Thus the same latent parameters that control exact boundary mass also control the expected active-gradient probability for this map. This identity is a sensitivity diagnostic, not an optimization guarantee. These considerations explain why two equally feasible maps can still differ statistically or computationally.

\subsection{Relationship to projection and ProbHardE2E}
ProbHardE2E/DPPL constructs a feasible law by projecting an ambient predictive law, whereas an \SFLP{} defines it directly through a structural latent law and feasible map. For affine $AY=b$, DPPL conditions an ambient diagonal Gaussian~\citep{utkarsh2025}; \SLPProbHard{} models the $d-m$ free coordinates in Eq.~\eqref{eq:null}. The DPPL conditioning and sampling equations are given in Appendix~\ref{app:dppl-equations}.

The constructions can have identical feasible support yet admit different covariance families under the selected heads. Equality of support therefore does not imply equality of probability allocation. Projection remains useful for implicit or difficult feasible sets; structural generation is attractive when meaningful feasible coordinates exist or boundary behavior matters. Appendix~\ref{app:comparison} provides the detailed mechanism comparison.

\section{Experiments}
We evaluate frozen held-out synthetic and seven-basin FDC benchmarks against unconstrained, soft, post-hoc, and matched end-to-end hard baselines; Appendix~\ref{app:protocol} provides the complete seed registries and protocols. The affine benchmark additionally includes ProbHardE2E/DPPL~\citep{utkarsh2025} using the official released repository, frozen at commit \texttt{ff787718e79e5ac6b673807d40ce1cddb0fd17c2}, and a verified affine adapter; the existing \SLPProbHard{} seed-20-29 results were reused unchanged rather than retrained.

Synthetic comparisons use matched backbones and training protocols, optimizing an equally weighted combination of marginal CRPS and Energy Score. The official-source DPPL~\citep{utkarsh2025,amazonprobharde2e2026}  affine adapter preserves the released Gaussian conditioning equations and passes an equation-level equivalence audit. Appendix~\ref{app:protocol} gives the training and evaluation settings; Appendix~\ref{app:dppl-audit} documents the adapter and verification.

The real-world source is a single frozen seasonal FDC dataset built from daily river discharge from the \emph{GloFAS v5.0 hydrological reanalysis}, a LISFLOOD-based JRC dataset~\citep{grimaldi2026glofas,vanderknijff2010,burek2013}, for seven Emilia-Romagna basins. Basin-aggregated precipitation, minimum-temperature, and maximum-temperature descriptors are derived from ERACLITO61~\citep{antolini2016}, with season, basin area, and basin identity completing the 22 predictors. The common 1996-2022 period yields 756 samples with a 532/112/112 temporal train/validation/test split. Following standard FDC/exceedance-probability representations and multi-output FDC learning~\citep{burgan2018,worland2019,vogel1994,vogel1995}, the ordered FDC target is
\begin{equation}
Q=[Q_{95},Q_{90},Q_{75},Q_{50},Q_{25},Q_{10},Q_{05}],\qquad 0\le Q_{95}\le\cdots\le Q_{05}.\label{eq:fdc}
\end{equation}
The three targets provide scientifically interpretable representations of the same underlying FDC. The seven levels describe discharge across exceedance regimes. Adjacent increments quantify changes between neighboring regimes, while the scale--shape representation separates the discharge range $R=Q_{05}-Q_{95}$ from its relative allocation across the curve. For $R>0$, normalized increments $P_i=\Delta_i/R$ describe this allocation; Appendix~\ref{app:fdc-nonlinear} specifies the flat-curve convention. The affine representation exposes increment coherence, whereas the nonlinear representation separates the low-flow baseline, discharge range, and relative increment allocation. Reusing the underlying FDCs and split allows the experiments to examine how these representation choices affect predictive laws.

The original nonnegative-ordering benchmark models $\log(1+Q)$. The affine and nonlinear extensions reuse the \emph{same frozen predictors, raw FDCs, basin/season/year identities, and split} without rereading daily discharge or recomputing FDCs; they apply one train-only multiplicative RMS scale $s_Q=43.7191$ to raw $Q$, preserving difference and scale-shape identities exactly. Cross-representation comparisons therefore concern the combined representation and preprocessing choices; they do not isolate a change of feasible map under identical target preprocessing. Appendix~\ref{app:protocol} gives the complete constructions, seed registries, baselines, metrics, and reproducibility details.

\section{Results}\label{sec:results}
\subsection{Primary synthetic benchmarks}
Table~\ref{tab:primary-compact} reports matched end-to-end hard references, the direct official-source ProbHardE2E/DPPL affine-adapter comparison, and SLP. The affine comparison first highlights the stochastic representation: DPPL conditions an ambient Gaussian with 11 stochastic coordinates, whereas \SLPProbHard{} learns an 8-dimensional Gaussian in affine free coordinates, a 27.27\% reduction. Both have zero observed violations at the $10^{-5}$ threshold, but their primary diagonal heads induce different feasible covariance families. \SLPProbHard{} lowers MSE by 1.30\% ($p=0.0072$) and MAE by 0.43\% ($p=0.0353$), with 2.48\% narrower intervals ($p=0.0341$). DPPL lowers marginal CRPS by 1.18\% ($p=3.76\times10^{-4}$) and gives higher coverage (0.809 versus 0.786), closer to the nominal 0.90 level. The paired test does not detect an Energy Score difference ($p=0.391$). These results are consistent with representation-dependent probability allocation within the same feasible affine support. Projection-Or remains a transparent closed-form reference; richer \SLPProbHard{} covariance later closes or reverses its diagonal-head score gap. In ordering and simplex, positive-part \SLPProbHard{} significantly improves the reported CRPS/ES relative to the matched end-to-end projection.

On the unit circle, the angle map $h(z)=(\cos z,\sin z)$ achieves zero observed $10^{-5}$ violations using one stochastic coordinate versus two for radial projection. Paired tests detect no MSE, CRPS, or Energy Score differences across ten seeds. Appendix~\ref{app:additional-results}, under ``Nonlinear circle primary results,'' gives the full comparison.

\begin{table}[t]
\centering\scriptsize
\resizebox{\linewidth}{!}{%
\begin{tabular}{llrrrrrr}
\toprule
Family & Method & MSE $\downarrow$ & CRPS $\downarrow$ & ES $\downarrow$ & Cov.90 & Stoch. dim. & Feasibility\\
\midrule
Linear & Projection-Or & 0.2965 & 0.2847 & 1.2088 & 0.785 & 11 & VR$_{10^{-5}}=0$\\
Linear & DPPL (official-source adapter) & 0.2997 & 0.2868 & 1.2190 & 0.809 & 11 & VR$_{10^{-5}}=0$\\
Linear & \SLPProbHard{} & 0.2957 & 0.2901 & 1.2216 & 0.786 & 8 & VR$_{10^{-5}}=0$\\
Order & Projection-isotonic & 5.5677 & 0.7945 & 2.2440 & 0.784 & 5 & exact order\\
Order & SLP-positive-part & 5.0948 & 0.7820 & 2.1883 & 0.797 & 5 & exact order\\
Simplex & Projection-simplex & 0.0322 & 0.0890 & 0.2474 & 0.851 & 5 & CE $1.51\!\times\!10^{-8}$\\
Simplex & SLP-positive-part & 0.0320 & 0.0866 & 0.2431 & 0.846 & 5 & CE $2.30\!\times\!10^{-8}$\\
Circle & Radial projection & 0.3557 & 0.3277 & 0.5147 & 0.864 & 2 & VR$_{10^{-5}}=0$\\
Circle & SLP-angle & 0.3557 & 0.3272 & 0.5143 & 0.854 & 1 & VR$_{10^{-5}}=0$\\
\bottomrule
\end{tabular}}
\caption{Primary synthetic benchmarks (means across untouched seeds). ``DPPL (official-source adapter)'' denotes ProbHardE2E/DPPL using the official released source with the verified affine adapter; full source audit and paired statistics are in Appendix~\ref{app:dppl-audit}.}
\label{tab:primary-compact}
\end{table}

Boundary behavior illustrates Eq.~\eqref{eq:boundary}. For ordering, SLP's exact-tie rate 0.208 is closer than end-to-end isotonic projection's 0.396 to the observed 0.255. For the $d=5$ simplex, projection's zero rate 0.246 is closer to the observed 0.250 than SLP's 0.180. Thus boundary capability is not boundary calibration: the map determines whether an event can carry mass, while the latent law and training determine how much mass it receives.

\subsection{Composed and real-world structural representations}
A controlled synthetic nonnegative-ordering task generated targets by cumulative positive-part. Because the data-generating mechanism uses the same structure, this is mechanism validation for the boundary-preimage prediction rather than independent evidence that cumulative-positive-part \SLPProbHard{} generally dominates alternative enforcement. Both nonnegative isotonic projection and \SLPProbHard{} satisfy non-negativity and ordering exactly; predictive scores are close, with \SLPProbHard{} numerically lower on CRPS and Energy Score.

Holding the underlying FDC data and split fixed, Table~\ref{tab:hydro-structural} tests how three scientifically meaningful representations affect feasibility, stochastic dimensionality, calibration, and predictive scores. First, the nonnegative-ordering map directly enforces Eq.~\eqref{eq:fdc}; both hard methods eliminate negative discharge and crossings, with \SLPProbHard{} descriptively lower CRPS/ES but projection better calibrated in 90\% coverage. Second, the affine representation augments the seven FDC levels with six adjacent increments,
\begin{equation}
Y_{\mathrm{aff}}=(Q_0,\ldots,Q_6,\Delta_0,\ldots,\Delta_5)\in\R^{13},
\qquad \Delta_i=Q_{i+1}-Q_i,
\end{equation}
equivalently $Q_i-Q_{i+1}+\Delta_i=0$. The six equalities have rank 6, so the feasible affine representation has 7 intrinsic coordinates. \SLPProbHard{} uses $h(z)=(z,\operatorname{diff}z)$ with 7 stochastic coordinates versus 13 for ambient end-to-end affine projection, a 46.15\% reduction. Both have zero affine violations at $10^{-5}$; this branch enforces increment coherence only and does not additionally enforce nonnegativity or FDC ordering. The paired ten-seed tests did not detect differences in MSE, MAE, CRPS, or Energy Score; \SLPProbHard{} intervals are significantly narrower (10.94 vs.\ 13.57) but coverage is lower (0.712 vs.\ 0.746), so the result is a calibration-sharpness trade-off rather than a score win.

Third, for $R>0$, the nonlinear scale-shape representation defines
\begin{equation}
B=Q_0,\qquad \Delta_i=Q_{i+1}-Q_i,\qquad R=\sum_i\Delta_i,\qquad P_i=\Delta_i/R,
\end{equation}
and models
\begin{equation}
Y_{\mathrm{nl}}=(B,R,P,\Delta)\in\R^{14},
\qquad \Delta_i=RP_i,\quad P_i\ge0,\quad \sum_iP_i=1.\label{eq:fdc-nonlinear}
\end{equation}
\SLPProbHard{} uses $B=\pospart{z_B}$, $R=\pospart{z_R}$, normalized positive-part $P$, and $\Delta=RP$. For $d=7$, the mathematical intrinsic dimension is 7 but this boundary-capable implementation uses 8 computational stochastic coordinates; the ambient correction model uses 14, giving a 42.86\% reduction. Both coordinate reductions are relative to the corresponding augmented output representation. The original FDC has seven flow levels, and the nonlinear implementation retains one redundant computational coordinate.
Both end-to-end hard methods have zero nonlinear violations, zero negative reconstructed discharge, and zero ordering violations. None of the paired Raw-$Q$ MSE/MAE/CRPS/ES differences was statistically detected at the ten-seed level; \SLPProbHard{} is numerically lower on CRPS (4.047 vs.\ 4.177) and Energy Score (14.313 vs.\ 14.483). The comparator is a five-iteration differentiable alternating feasible correction, \emph{not} an exact joint Euclidean projection. Post-hoc feasible correction gives the lowest reported mean raw-$Q$ CRPS among the nonlinear baselines (3.568 versus 4.047 for SLP) and higher 90\% coverage (0.823 versus 0.649). Thus the nonlinear result supports feasible structural generation with fewer computational coordinates, while the strongest reported predictive scores favor post-hoc correction (Appendix~\ref{app:fdc-nonlinear-results}).

\begin{table}[t]
\centering\scriptsize
\resizebox{\linewidth}{!}{%
\begin{tabular}{llrrrrrr}
\toprule
Real-world representation & Hard method & Raw-$Q$ CRPS & Raw-$Q$ ES & Cov.90 & Width.90 & Stoch. dim. & Hard VR\\
\midrule
Nonneg. order & Projection-nonneg-isotonic & 3.894 & 15.561 & 0.757 & 14.15 & 7 & 0\\
 & SLP-nonneg-positive-part & 3.710 & 14.603 & 0.722 & 22.26 & 7 & 0\\
Affine coherence & End-to-end affine projection & 3.627 & 13.649 & 0.746 & 13.57 & 13 & 0\\
 & SLP-affine & 3.698 & 13.997 & 0.712 & 10.94 & 7 & 0\\
Nonlinear scale-shape & Post-hoc feasible correction & 3.568 & 13.754 & 0.823 & 14.97 & 14 & 0\\
 & End-to-end feasible correction & 4.177 & 14.483 & 0.630 & 15.75 & 14 & 0\\
 & SLP-scale-shape & 4.047 & 14.313 & 0.649 & 15.67 & 8 & 0\\
\bottomrule
\end{tabular}}
\caption{Three structural views of the same frozen seven-basin seasonal FDC dataset. ``Hard VR'' is the representation-specific $10^{-5}$ violation rate; the nonlinear hard methods also have zero negative reconstructed discharge and zero ordering violations. For the affine rows, \texttt{Hard VR} measures increment-coherence equalities only.
}
\label{tab:hydro-structural}
\end{table}
\subsection{Covariance and computation}
Richer \SLPProbHard{} covariance heads separate probabilistic expressiveness from structural feasibility. On the linear benchmark, moving from diagonal to rank-4 or full covariance improves \SLPProbHard{} CRPS from 0.2889 to about 0.2837 and Energy Score from 1.2171 to 1.2071 while keeping zero observed violations. This helps interpret the official-source DPPL affine-adapter result: covariance-weighted conditioning of an ambient diagonal Gaussian and a diagonal structural latent Gaussian induce different feasible covariance families, so identical support need not imply identical proper scores. This ablation varies covariance expressiveness within \SLPProbHard{} while preserving structural feasibility; it is not a fully covariance-matched DPPL comparison.

Under matched CPU instrumentation, median inference latency is 1.65 ms for \SLPProbHard{} versus 3.22 ms for the DPPL affine adapter. These implementation-specific timings and parameter counts are detailed in Appendix~\ref{app:profiling} and Appendix~\ref{app:dppl-audit}.
\subsection{Scaling, finite precision, and statistical interpretation}
The scaling experiments test whether the primary observations survive beyond a single dimension. For affine equalities, \SLPProbHard{} always uses $q=d-m$ stochastic coordinates across twelve pre-specified configurations, reducing stochastic dimension by 12.3-38.5\%. Point accuracy remains close to Projection-Or; the latter usually retains a small CRPS/ES advantage under the diagonal \SLPProbHard{} head. Re-evaluating the primary affine map in float64 collapses residuals from roughly $10^{-7}$ to $10^{-15}$, confirming that base-case residuals are numerical realization effects rather than relaxed mathematical constraints.

Zero thresholded violations in the primary benchmark do not extend to every float32 scaling configuration: at $(d,m,q)=(97,33,64)$, $\mathrm{VR}_{10^{-5}}$ is 0.0193 for SLP and 0.3958 for Projection-Or (Appendix~\ref{app:linear-scaling}). Ordering and simplex scaling show a different pattern. Positive-part \SLPProbHard{} has lower CRPS and Energy Score than the matched end-to-end projection in every evaluated dimension $d\in\{3,5,10,20,40\}$.

Complete paired confidence intervals, Wilcoxon tests, source audits, scaling tables, and numerical diagnostics are given in the appendix.

\section{Discussion and Limitations}
Exact feasibility does not uniquely specify a predictive law.
The experiments show how structural representation shapes boundary
probability, stochastic coordinates, attainable covariance, and
calibration-sharpness trade-offs. Coordinate savings depend on the
selected map: affine and circle constructions exploit intrinsic
coordinates, whereas normalized simplex and nonlinear scale-shape
retain computational redundancy.
\paragraph{From constraint sets to structural representations.} The three FDC views test physical ordering, increment coherence, and scale--shape coherence on one fixed dataset and split. Together with their distinct preprocessing, they show how scientifically meaningful representation choices affect probabilistic predictions. Their hydrological interpretation is developed in Appendices~\ref{app:protocol} and~\ref{app:discussion-representation}.

\paragraph{Calibration.}
Hard feasibility does not ensure calibrated uncertainty. The reported
90\% marginal coverage for SLP is 0.722, 0.712, and 0.649 across the
three FDC representations. These values show substantial undercoverage;
interval width must therefore be interpreted jointly with coverage,
and narrower intervals alone do not establish better uncertainty
quantification.

\paragraph{Explicit nonlinear-equality scope.} The unit circle and the FDC scale-shape map demonstrate two explicit nonlinear equalities, but they do not imply a general \SLPProbHard{} construction for arbitrary nonlinear constraints. The circle map is periodic and non-injective; the scale-shape map uses a boundary-capable simplex representation with one redundant computational coordinate. Projection/correction remains more general for implicit or difficult feasible sets.
\paragraph{Computation.} Stochastic-coordinate savings are architectural and hardware independent. Wall-clock improvements depend on the workload and implementation; optimized affine conditioning is itself inexpensive. Appendix~\ref{app:discussion-computation} distinguishes these claims.
\paragraph{Limitations.} \SLPProbHard{} requires a useful explicit map, and non-surjective maps introduce representation bias. The primary positive-part boundary-mass implementation is nonsmooth and has a flat negative region. For maps with discontinuous fallbacks, samplewise differentiation alone does not establish unbiased gradients of expected scores. Nonlinear pushforwards may lack tractable densities or moments. The hydrological target is GloFAS v5.0 LISFLOOD-based reanalysis discharge rather than observed-gauge discharge~\citep{grimaldi2026glofas}. The nonlinear FDC comparator is an alternating feasible correction rather than an exact joint Euclidean projection. The direct ProbHardE2E experiment is an official-source \emph{affine-adapter} comparison, not a reproduction of the authors' PDE datasets or published wall-clock experiments; it establishes one compatible direct comparison, not superiority across ProbHardE2E's broader supported classes. Additional limitations and numerical diagnostics are in the appendix.

\section{Conclusion}
\SLPProbHard{} introduces a cross-family representation-centered framework in which a structural latent law and a feasible map jointly define a hard-feasible predictive law. The latent law and map jointly determine predictive support and boundary probability; the chosen coordinates and map specify the stochastic representation and shape attainable dependence, calibration, expressiveness, and computation. Experiments span affine equalities, ordering, simplex, explicit nonlinear maps, and three representations of the same hydrological FDC data. The direct affine DPPL comparison demonstrates shared practical feasibility and representation-dependent predictive trade-offs. Structural generation is useful when suitable explicit coordinates exist; projection and correction remain useful for implicit or difficult feasible sets.

\clearpage
\section*{Acknowledgments}
This study and the related research have been carried out within, and with the support of, the Italian inter-university PhD Programme in Sustainable Development and Climate Change (PhD-SDC; www.phd-sdc.it), funded by the European Union – NextGenerationEU under PNRR, at IUSS University School for Advanced Studies Pavia, with the University of Parma as the host institution. 

The simulations presented in this work were performed on the High Performance Computing Data Center at IUSS under the PhD-SDC project. The computing facility was co-funded by Regione Lombardia through the funding programme established by Regional Decree No. 3776 of November 3, 2020.

\section *{Funding}
This study received no dedicated project funding. Doctoral and computational support are acknowledged above.

\bibliography{references}
\bibliographystyle{slp_preprint}
\clearpage
\appendix

\section{Extended Related Work}\label{app:related}
\begin{table}[H]
\centering\tiny
\setlength{\tabcolsep}{2pt}
\renewcommand{\arraystretch}{1.08}
\begin{tabularx}{\linewidth}{p{0.24\linewidth}Yp{0.12\linewidth}p{0.14\linewidth}Yp{0.18\linewidth}}
\toprule
\textbf{Work} &
\textbf{Core mechanism} &
\textbf{Prob. law?} &
\textbf{Hard feasibility} &
\textbf{Representation emphasis} \\
\midrule

OptNet / diff.\ convex
\citep{amos2017,agrawal2019}
&
optimization layer
&
opt.-centered
&
supported problem
&
optimization geometry
\\

DC3~\citep{donti2021}
&
completion + correction
&
det./opt.
&
correction-dependent
&
completion/correction
\\

\citet{frerix2020}
&
direct linear-inequality map
&
deterministic
&
yes, supported class
&
feasible geometry
\\

RAYEN~\citep{tordesillas2023}
&
direct convex feasible layer
&
deterministic
&
yes, convex class
&
convex parameterization
\\

CLOVER~\citep{olivares2024clover}
&
differentiable coherent structural generation
&
yes
&
yes, hierarchy
&
coherent probabilistic representation
\\

Girolimetto-Di Fonzo~\citep{girolimetto2024general}
&
free/constrained structural-like representation
&
yes
&
yes, linear coherence
&
general linear structural coordinates
\\

SMT-HCSD~\citep{bekele2026smthcsd}
&
stochastic nonnegative increment
&
yes
&
yes, ordering
&
domain-specific structural generation
\\

ProbHardE2E~\citep{utkarsh2025}
&
probabilistic projection
&
yes
&
yes, supported classes
&
projected-law formulation
\\

\citet{li2026hardlinear}
&
hard linear constrained generative law
&
yes
&
yes, linear equality
&
probabilistic constrained linear law
\\

Soft-Radial~\citep{schneider2026}
&
radial feasible map
&
map-centered
&
interior feasible
&
interior support geometry
\\

CNP~\citep{hintermuller2026cnp}
&
admissible neural coordinates
&
opt.-centered
&
yes, supported sets
&
function-space coordinates
\\

\SLPProbHard{}
&
structural latent pushforward
&
yes
&
yes, map image
&
support/boundary, stochastic coordinates,
expressiveness/calibration, computation
\\

\bottomrule
\end{tabularx}

\caption{\textbf{Literature comparison by mechanism and probabilistic emphasis.}
Structural probabilistic generation has important precedents in hierarchical,
linear, and domain-specific settings. \SLPProbHard{} uses structural
representation as a cross-family probabilistic model-design object and jointly
analyzes its consequences for support/boundary probability, stochastic
coordinates, expressiveness/calibration, and computation.
The direct ProbHardE2E comparison in this paper is restricted to the compatible
synthetic affine benchmark.}
\label{tab:literature}
\end{table}

\subsection{Hard constraints in machine learning}
Hard constraints define a feasible set $\Cset$ through equalities, inequalities, order relations, conservation laws, or boundary conditions. Soft penalties can be useful but leave a penalty-weight trade-off and cannot in general guarantee zero violation. Differentiable optimization layers such as OptNet and differentiable convex optimization layers bring exact or approximate optimization into end-to-end learning~\citep{amos2017,agrawal2019}. Classical convex projection geometry is summarized by \citet{boyd2004}. For constraint families used in this paper, efficient Euclidean projection algorithms are well established for simplex/$\ell_1$ geometry~\citep{duchi2008,condat2016}, while isotonic methods provide classical projection machinery for order cones~\citep{best1990,deleeuw2009}. Differentiable relaxations and projections for sorting/ranking provide related trainable machinery~\citep{cuturi2019,blondel2020}. More recent hard-constraint layers span operator-splitting projection, LMI projection, nonlinear repair, learnable slack-variable projection, and hard projection inside physics-informed networks~\citep{grontas2025,tang2026,goertzen2026,chu2026,wang2026diffslack,horne2026}.

\subsection{Probabilistic forecasting under constraints}
In probabilistic prediction, hard feasibility should hold at the sample level, not only for a conditional mean. Proper scoring rules such as CRPS and Energy Score evaluate distributional quality through calibration and sharpness~\citep{matheson1976,hersbach2000,gneiting2007,gneiting2008,gneiting2014,jordan2019}; energy-distance ideas underlie multivariate distance-based evaluation~\citep{szekely2013}, while variogram scores provide a dependence-sensitive alternative~\citep{scheuerer2015}. Coherent forecasting developed from reconciliation of hierarchical point forecasts~\citep{hyndman2011,hyndman2016} to explicitly probabilistic reconciliation and end-to-end coherent prediction~\citep{bentaieb2017,rangapuram2021,rangapuram2023,jeon2019,wickramasuriya2019,panagiotelis2023,girolimetto2024}. ProbHardE2E~\citep{utkarsh2025} provides a recent explicit probabilistic hard-constraint framework based on differentiable projection. \SLPProbHard{} pursues the same high-level objective but makes the feasible representation generative rather than corrective.

\subsection{Feasible-by-construction parameterization}
Many constraints admit direct parameterizations: nonnegative structural maps for non-negativity, cumulative nonnegative increments for ordering, null-space coordinates for linear equalities, normalized nonnegative weights for the simplex, and ansatz functions for initial or boundary conditions. Related monotonic-network constructions use lattice, integral, certified, constrained-weight, and smooth min-max mechanisms to encode monotonicity~\citep{you2017,wehenkel2019,liu2020,runje2023,igel2024}. For compositional outputs, the simplex has a long statistical literature built around logistic-normal and compositional representations~\citep{aitchison1980,aitchison1982}; sparse probability mappings such as sparsemax illustrate that exact zeros can be an intentional representation choice~\citep{martins2016}. Exact-condition ansatzes likewise have a long constructive lineage in neural differential-equation solvers~\citep{lagaris1998}, with later geometry-aware distance-function formulations and comparative PINN studies imposing Dirichlet conditions exactly~\citep{sukumar2022,berrone2023}. These precedents motivate structural maps as established primitives; the present paper studies their role inside a probabilistic predictive-law methodology.

\subsection{Constructive feasibility lineage}
\SLPProbHard{} sits within a broader constructive-feasibility lineage. DC3 combines equality completion with inequality correction~\citep{donti2021}. \citet{frerix2020} parameterize activations satisfying homogeneous linear inequalities. Homeomorphic Projection maps from a simple domain into a constrained feasible set~\citep{liang2023,liang2024}. RAYEN enforces hard convex constraints without expensive orthogonal projection or inner gradient descent~\citep{tordesillas2023}. HardNet studies hard-constrained networks with universal approximation guarantees~\citep{min2024}, while FSNet integrates feasibility-seeking into constrained optimization learning~\citep{nguyen2025}. Soft-Radial Projection maps ambient points radially into the interior of a convex feasible set and is particularly relevant as a recent non-orthogonal feasibility transformation~\citep{schneider2026}. The rapidly developing projection/repair branch also includes $\Pi$net for differentiable orthogonal projection, LMI-Net for semidefinite constraints, HardNet++ and SnareNet for nonlinear feasibility, DiffSlack for nonlinear inequalities, and KKT-based hard-constraint PINNs~\citep{grontas2025,tang2026,goertzen2026,chu2026,wang2026diffslack,chen2024kktpin,iftakher2025kkthardnet}.

Most of these methods target deterministic feasible outputs or optimization solutions. \SLPProbHard{} instead centers a full predictive distribution: stochastic latent samples are mapped into feasible outputs, and the resulting pushforward law is trained by likelihoods or proper scoring rules.

\paragraph{Adjacent probabilistic and feasible-parameterization work.} \citet{li2026hardlinear} study probabilistically sound deep generative modeling under hard linear equalities, directly establishing that hard-constrained generation and probability modeling can be combined in the linear setting. Soft-Radial Projection~\citep{schneider2026} constructs a differentiable radial reparameterization into the strict interior of convex feasible sets, while Constrained Neural Parameterization~\citep{hintermuller2026cnp} maps unconstrained neural coordinates into admissible function-space sets for optimization. These papers make the novelty boundary important: \SLPProbHard{} does not claim that feasible-by-construction transforms are new. Its methodological contribution is the coupled probabilistic treatment of structural latent generation, map-induced support and boundary atoms, free versus redundant stochastic coordinates, conditional approximation/parameterization bias, and cross-family proper-score/computational consequences under a single hard-constrained predictive-law formalism.

\paragraph{Additional context.} The constructive-feasibility landscape also includes differentiable sorting/ranking, isotonic regression, monotone networks, exact-condition ansatzes, semidefinite-output layers, nonlinear repair, KKT-based formulations, and structured probabilistic reconciliation. These threads motivate the broader taxonomy but differ in whether they target deterministic feasibility, constrained optimization, or a complete predictive distribution. Together, these adjacent lines motivate the paper's representation-centered view of probabilistic hard-constrained learning.

\section{Complete Method and Structural-Map Taxonomy}\label{app:maps}
\subsection{Structural latent distribution}
A neural backbone $f_\theta$ predicts latent distribution parameters $\eta_\theta(x)$. Under a reparameterized Gaussian,
\begin{equation}
Z=\mu_\theta(x)+L_\theta(x)\epsilon,\qquad \epsilon\sim\mathcal N(0,I).
\end{equation}
The framework is not restricted to Gaussian latents; Student-$t$, Gamma, LogNormal, zero-inflated, mixture, autoregressive, or flow-based families may be used when scientifically appropriate. Flexible invertible transports such as Real NVP and neural spline flows provide examples of richer latent-to-output distribution transforms~\citep{dinh2017,durkan2019,papamakarios2021}, while explicit and implicit pathwise-gradient methods broaden the trainable latent-family choices beyond simple location-scale Gaussians~\citep{figurnov2018,jankowiak2018}.

\subsection{Structural coordinates and computational latent dimension}
For $Ay=b$, choose a feasible point $y_0$ and a null-space basis $N\in\R^{d\times q}$ satisfying $Ay_0=b$ and $AN=0$. Then
\begin{equation}
Y=y_0+NZ,\qquad Z\in\R^q,
\end{equation}
uses exactly the free affine coordinates. An ambient projection model instead predicts $d$ quantities before removing infeasible directions.

This dimensional advantage is not universal. A convenient nonlinear map can use a redundant latent representation. In the normalized positive-part boundary-mass implementation below, the simplex has intrinsic dimension $d-1$ but the computational map uses $d$ latent coordinates. We therefore distinguish \emph{intrinsic degrees of freedom} from \emph{computational latent dimension}. Dimensional economy is a property of the chosen map, not of the \SLPProbHard{} label by itself.

\subsection{Feasibility map and pushforward law}
The feasibility map $h_\phi$ encodes the known structure. For a measurable set $B\subseteq\Yset$,
\begin{equation}
P^Y_\theta(B\mid x)=P^Z_\theta\bigl(h_{\phi,x}^{-1}(B)\mid x\bigr).
\end{equation}
Thus geometric properties of preimages under $h_\phi$ directly determine where the constrained predictive distribution can place probability.

\subsection{Training objectives}
When the induced density is tractable, one may minimize negative log-likelihood. Otherwise, sample-based proper scores provide a generic objective,
\begin{equation}
\mathcal L_S(\theta,\phi)=\sum_{i=1}^N S\!\left(P^Y_\theta(\cdot\mid x_i),y_i\right),
\end{equation}
where $S$ may be CRPS, Energy Score, Variogram Score, or another proper score~\citep{matheson1976,gneiting2007,gneiting2008,jordan2019}. Exact sample-level feasibility and analytic density tractability are separate properties.

\subsection{Training and inference interface}
Algorithm~1 in the main paper gives the complete generic loop. The same interface supports tractable transformed likelihoods or sample-based proper scores; the Structural Feasibility Map is applied during both training and inference, so feasibility does not depend on a separate post-training repair step.

\section{Structural Feasibility Map Taxonomy}
The taxonomy below is organized by structural representation rather than by optimization constraint form. Affine maps encode free coordinates directly; nonnegative/ordered maps compose a Nonnegative Structural Map with cumulative structure; normalized/compositional maps combine nonnegative components with normalization; explicit nonlinear-equality/manifold maps parameterize known manifolds or graphs; composed maps enforce intersections of structural requirements; and functional maps embed initial or boundary conditions in the output form. This axis is complementary to projection taxonomies based on linear equality, nonlinear equality, or convex inequality classes.

\begin{longtable}{>{\raggedright\arraybackslash}p{0.12\linewidth}
>{\raggedright\arraybackslash}p{0.17\linewidth}
>{\raggedright\arraybackslash}p{0.33\linewidth}
>{\raggedright\arraybackslash}p{0.16\linewidth}}
\caption{Representative Structural Feasibility Maps. Coverage, boundary reachability, and boundary probability are distinct properties.}\label{tab:taxonomy}\\
\toprule
\textbf{Constraint} & \textbf{Feasible set} & \textbf{Structural map} & \textbf{Qualification}\\
\midrule\endfirsthead
\toprule
\textbf{Constraint} & \textbf{Feasible set} & \textbf{Structural map} & \textbf{Qualification}\\
\midrule\endhead
Non-negativity & $\R_+^d$ & $h_i(z)=\psiplus(z_i)$ & $\psibm$: boundary mass; $\psibr$: reachable boundary; $\psiint$: interior only.\\
Bounded interval & $[a,b]^d$ & $a+(b-a)\sin^2(z_i)$ or $a+(b-a)\sigma(z_i)$ & $\sin^2$ reaches endpoints; sigmoid excludes them.\\
Weak ordering & $y_1\le\cdots\le y_d$ & $y_1=z_1$, $y_k=y_{k-1}+\psiplus(z_k)$ & Nonnegative increments; boundary behavior depends on $\psiplus$.\\
Ordering implementations & same & $\psibm(z)=\pospart{z}$ or $\pospart{z}^{p}$; $\psibr(z)=z^2$; $\psiint(z)=\softplus(z)$ & Positive-part family gives tie mass; square is boundary-reachable; softplus is interior.\\
Linear equality & $Ay=b$ & $h(z)=y_0+Nz$, $AN=0$, $Ay_0=b$ & Exact and surjective for a full null-space basis; latent dimension $d-m$.\\
Simplex & $\Delta^{d-1}$ & $a_i=\psiplus(z_i)$, $y_i=a_i/\sum_j a_j$ when active; feasible fallback if all inactive & Normalized nonnegative structural map; face mass depends on $\psiplus$.\\
Simplex implementations & $\Delta^{d-1}$ & normalized $\psibm$, normalized $\psibr$, or exponential-normalization $\psiint$ & Boundary-mass, boundary-reachable, and interior simplex laws, respectively.\\
Unit circle & $y_1^2+y_2^2=1$ & $h(z)=(\cos z,\sin z)$ & Surjective, periodic, non-injective; one structural coordinate.\\
Scale-shape FDC & $B\ge0,R\ge0,P\in\Delta^{K-1},\Delta=RP$ & $B=\psibm(z_B)$, $R=\psibm(z_R)$, $P=\mathrm{Norm}_{\psibm}(z_P)$, $\Delta=RP$ & Composed nonlinear equality; computational latent dimension may exceed intrinsic DOF.\\
Initial condition & $y(t_0)=y_0$ & $y(t)=y_0+(t-t_0)g_\phi(z,t)$ & Exact at $t_0$; path coverage depends on $g_\phi$.\\
Boundary condition & endpoint constraints & endpoint interpolant plus vanishing residual factor & Exact endpoints; interior expressiveness is model-dependent.\\
\bottomrule
\end{longtable}

\subsection{Primary weak-order map}
For
\begin{equation}
\Cset_{\mathrm{ord}}=\{y\in\R^d:y_1\le\cdots\le y_d\},
\end{equation}
a generic ordered Structural Feasibility Map uses nonnegative increments
\begin{align}
\delta_k&=\psiplus(z_k),\\
y_1&=z_1,\\
y_k&=y_{k-1}+\delta_k,\qquad k=2,\ldots,d.
\end{align}
Every realization is weakly ordered for any $\psiplus:\R\to\R_{\ge0}$. The reported primary experiment chooses the boundary-mass implementation $\psibm(z)=\pospart{z}=\max(0,z)$, using the same positive-part implementation specified in the main experiment. This cumulative-increment construction is distinct from monotone-network constraints on the input-output function itself, but it shares the broader feasible-by-design motivation of monotonic architectures~\citep{you2017,wehenkel2019,runje2023,igel2024}. Since the entire half-line $z_k\le0$ maps to zero increment under this implementation, a continuous latent law can induce an atom at an exact tie. Other members of the same boundary-mass class include $\psibm(z)=\pospart{z}^{p}$ for $p>0$ and thresholded positive parts $\pospart{z-\tau}$.

\subsection{Primary simplex map}\label{app:simplex-map}
A generic normalized Structural Feasibility Map first constructs $a_i=\psiplus(z_i)$ and $S(z)=\sum_j a_j$. The reported primary experiment again uses the boundary-mass positive-part implementation $\psibm(z)=\pospart{z}$. For $S(z)>0$ define
\begin{equation}
y_i=\frac{a_i}{S(z)}.
\end{equation}
The all-inactive event $S(z)=0$ has positive probability under common continuous latent laws. The implementation resolves it by assigning the sample to the simplex vertex indexed by $\arg\max_i z_i$. This preserves feasibility and assigns the all-inactive event to a vertex. The fallback is permutation-equivariant when the maximizing coordinate is unique, hence almost surely under an absolutely continuous latent law. A fixed measurable tie-breaking convention defines the remaining cases; pointwise permutation equivariance at ties is not asserted. The event is explicitly monitored; in the final $d=5$ primary experiment its mean rate is $3.81\times10^{-4}$. This reported frequency alone does not characterize fallback use throughout optimization or establish negligible gradient bias.

The map is surjective onto the closed simplex: for any $y\in\Delta^{d-1}$, the latent choice $z=y$ is a valid preimage. Its computational latent dimension is $d$, even though the simplex has $d-1$ intrinsic degrees of freedom. This closed-boundary emphasis differs from logistic-normal/softmax-style interior parameterizations common in compositional modeling~\citep{aitchison1980,aitchison1982}; constructive simplex distributions and sparse mappings provide additional context for representation-dependent boundary behavior~\citep{sethuraman1994,martins2016}.

\subsection{Explicit nonlinear-equality maps}
For the unit circle
\begin{equation}
\Cset_{\mathrm{circ}}=\{y\in\R^2:y_1^2+y_2^2=1\},
\end{equation}
the structural angle map
\begin{equation}
h_{\mathrm{circ}}(z)=(\cos z,\sin z)
\end{equation}
is surjective onto the circle and uses one structural coordinate. It is periodic and therefore non-injective; \SLPProbHard{} does not require a unique latent preimage for every feasible output. The matched radial reference maps ambient $u\in\R^2$ to $u/\|u\|_2$ with a deterministic feasible fallback for the exact-zero vector.

The real-world scale-shape construction is a composed map. For a seven-level weakly increasing FDC, let $K=6$, $B=Q_0$, $\Delta_i=Q_{i+1}-Q_i$, $R=\sum_i\Delta_i$, and $P\in\Delta^{K-1}$. The structural implementation uses
\begin{equation}
B=\pospart{z_B},\qquad R=\pospart{z_R},\qquad
P_i=\frac{\pospart{z_{P,i}}}{\sum_j\pospart{z_{P,j}}},
\qquad \Delta_i=RP_i,
\end{equation}
with the same feasible simplex fallback used by the normalized positive-part map. It therefore enforces nonnegative baseline/range, compositional shape, the bilinear equalities $\Delta_i=RP_i$, and nonnegative increments simultaneously. For $d=7$, the mathematical structural object has 7 intrinsic degrees of freedom but the implementation uses 8 computational latent coordinates because $P\in\Delta^5$ is represented with six normalized nonnegative coordinates.

\section{Detailed Structural-Map Remarks}
\subsection{Ordering and monotone paths}
The ordering and monotone-path constructions use the same cumulative Nonnegative Structural Map principle. The constraint only requires $\psiplus(z)\ge0$; probabilistic boundary behavior depends on which implementation is selected. Boundary-mass choices include the positive-part family $\pospart{z}^{p}$, $p>0$, and thresholded positive parts $\pospart{z-\tau}$, because a latent region of positive measure maps exactly to zero. The reported primary experiment uses $p=1$. Boundary-reachable choices such as $z^2$ can reach a tie but assign it zero probability under an absolutely continuous latent law. Interior choices such as softplus give strictly positive increments in real arithmetic. Floating-point underflow or rounding during cumulative addition can nevertheless produce exact numerical ties.

\subsection{Simplex maps}
The simplex Structural Feasibility Map can be constructed by normalizing nonnegative structural components, $a_i=\psiplus(z_i)$ and $y_i=a_i/\sum_j a_j$, together with a feasible rule for the all-inactive event when the chosen map permits it. The reported primary boundary-mass implementation uses $\psibm(z)=\pospart{z}$ and an argmax-vertex fallback. Normalized positive-part powers remain boundary-mass alternatives. Normalized squares are boundary-reachable and are surjective onto the closed simplex (aside from the singular all-zero latent vector handled numerically), but individual coordinate zeros require a measure-zero latent event under continuous Gaussian laws. Exponential normalization (softmax/logistic-normal) is a classical interior simplex representation~\citep{aitchison1980,aitchison1982}; sparsemax provides another mapping capable of exact zero coordinates~\citep{martins2016}. Thus ``simplex map'' is the structural family, while positive-part, square, softmax, and related transforms are implementations with different induced face probabilities.

\subsection{Initial and boundary conditions}
An initial condition $y(t_0)=y_0$ can be enforced by
\begin{equation}
y(t)=y_0+(t-t_0)g_\phi(z,t).
\end{equation}
Two endpoint conditions can be embedded in an interpolating ansatz plus a residual term multiplied by $(t-t_0)(t-t_1)$. These remain part of the general taxonomy but are not empirically benchmarked in this study.

\section{Complete Theory and Proofs}\label{app:theory}
\paragraph{Scope of this theoretical appendix.}
This appendix collects established identities and approximation results used to analyze \SFLP{}s. Rank-nullity, standard universal approximation, basic pushforward identities, truncated-Gaussian moments, and standard circular-distribution facts are used as foundations rather than claimed as standalone novelties. The \SLPProbHard{} contribution is their specialization and organization to characterize representation coverage, boundary probability, stochastic coordinates, covariance/expressiveness, and parameterization bias within a structural probabilistic predictive law.

\subsection{Pushforward and sample-level feasibility}
\begin{proposition}[Feasibility under pushforward]
Let $\Cset(x)$ be measurable, let $h_{\phi,x}:\Zset\rightarrow\Yset$ be measurable with $h_{\phi,x}(\Zset)\subseteq\Cset(x)$, and let $P^Z_\theta(\cdot\mid x)$ be any probability measure. If $P^Y_\theta=h_{\phi,x}\push P^Z_\theta$, then
\begin{equation}
P^Y_\theta(\Cset(x)\mid x)=1.
\end{equation}
\end{proposition}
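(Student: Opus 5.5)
The plan is to unfold the definition of the pushforward measure and compute the preimage of the feasible set directly. By the definition recalled in Eq.~\eqref{eq:boundary}, for every measurable $B\subseteq\Yset$ we have
\[
P^Y_\theta(B\mid x)=P^Z_\theta\bigl(h_{\phi,x}^{-1}(B)\mid x\bigr).
\]
Because $\Cset(x)$ is assumed measurable and $h_{\phi,x}$ is measurable, $h_{\phi,x}^{-1}(\Cset(x))$ is a measurable subset of $\Zset$. Applying the identity with $B=\Cset(x)$ is therefore legitimate.

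The key step is the set identity $h_{\phi,x}^{-1}(\Cset(x))=\Zset$. The inclusion $\subseteq$ holds by definition of a preimage. For $\supseteq$, take any $z\in\Zset$: the hypothesis $h_{\phi,x}(\Zset)\subseteq\Cset(x)$ gives $h_{\phi,x}(z)\in\Cset(x)$, so $z$ lies in the preimage. Substituting this identity yields
\[
P^Y_\theta(\Cset(x)\mid x)=P^Z_\theta(\Zset\mid x)=1,
\]
since $P^Z_\theta(\cdot\mid x)$ is a probability measure on $\Zset$.

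There is no genuine obstacle. The only point needing care is measure-theoretic bookkeeping: measurability of $\Cset(x)$ must hold in $\Yset$, so that the event is in the domain of the pushforward. I would also note that the argument is fully pointwise in $z$. No continuity, surjectivity, or density assumptions are used, and the statement holds for any latent law, including ones with atoms, as in the positive-part boundary-mass constructions. Finally, I would remark that feasibility holds not only almost surely but for every realization $Y=h_{\phi,x}(Z)$. This sample-wise form is the property the paper relies on.
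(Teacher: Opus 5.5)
Your proof is correct and essentially matches the paper's. The paper shows that the preimage of the complement $\Yset\setminus\Cset(x)$ is empty, so that set receives zero mass, whereas you show that the preimage of $\Cset(x)$ is all of $\Zset$, which is the same fact stated without complements (your closing remark about every realization being feasible is the paper's subsequent corollary).
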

\begin{proof}
For $B=\Yset\setminus\Cset(x)$, the image condition implies $h_{\phi,x}^{-1}(B)=\varnothing$. By the definition of pushforward, $P^Y_\theta(B\mid x)=0$.
\end{proof}

\begin{corollary}[Sample-level feasibility]
Every realization $z^{(s)}\in\Zset$ produces $h_\phi(x,z^{(s)})\in\Cset(x)$.
\end{corollary}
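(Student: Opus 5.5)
The corollary is essentially pointwise; it does not need the measure-theoretic content of the preceding proposition. I would argue directly from the image condition in the definition of an \SFLP{}: $h_{\phi}(x,z)\in\Cset(x)$ for every $z\in\Zset$, equivalently $h_{\phi,x}(\Zset)\subseteq\Cset(x)$. A realization $z^{(s)}$ is a draw from $P^Z_\theta(\cdot\mid x)$ and is an element of $\Zset$, because the latent law is supported on (defined over) $\Zset$. Hence $h_\phi(x,z^{(s)})\in h_{\phi,x}(\Zset)\subseteq\Cset(x)$. This holds for each $s$ separately, with no appeal to probability, so it covers every realization rather than almost every one.

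To connect with the proposition, I would add a remark. The proposition gives $P^Y_\theta(\Cset(x)\mid x)=1$, which alone yields feasibility only almost surely. The corollary strengthens this to \emph{every} realization precisely because the image inclusion holds for all $z$, not merely $P^Z_\theta$-almost all $z$. So the pathwise statement is the primary one, and the probability-one statement follows from it.

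The only step needing care is the meaning of ``realization.'' In the reparameterized implementation (Algorithm~1 and the Gaussian construction $Z=\mu_\theta(x)+L_\theta(x)\epsilon$), samples are produced as $z^{(s)}=g_\theta(x,\epsilon^{(s)})$. I would note that this map takes values in $\Zset$ for every $\epsilon$: for Gaussian latents $\Zset=\R^q$, so this is trivial. Maps with fallbacks (the simplex argmax-vertex rule) are defined on all of $\Zset$, including the all-inactive set, so the image condition already covers them. The statement concerns exact arithmetic. Floating-point residuals, such as the affine residuals discussed in the scaling section, are outside its scope, and I would say so in one sentence.
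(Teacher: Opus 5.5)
Your proof is correct and is essentially the paper's own argument. The paper also derives the corollary directly from the image condition $h_{\phi,x}(\Zset)\subseteq\Cset(x)$, applied to each sample separately, rather than from the probability-one statement of the preceding proposition; your remarks on reparameterized sampling, the fallback, and finite precision match the paper's caveats but are not needed.
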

\begin{proof}
This is immediate from the image condition and holds independently for every sample.
\end{proof}

\subsection{Affine moment propagation}
\begin{proposition}[Affine moment propagation]
If $h(z)=Bz+c$, then
\begin{align}
\E[Y\mid x]&=B\E[Z\mid x]+c,\\
\Cov(Y\mid x)&=B\Cov(Z\mid x)B^\top.
\end{align}
\end{proposition}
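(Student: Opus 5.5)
The plan is to use linearity of conditional expectation twice: once for the mean, and once for the second central moment. Here $B$ and $c$ may depend on $x$ (and $\phi$), but not on $Z$. Conditionally on $x$ they are therefore constants.

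First, I will state the integrability assumption the proposition implicitly needs. We assume $\E[\|Z\|\mid x]<\infty$ for the mean identity, and $\E[\|Z\|^2\mid x]<\infty$ for the covariance identity. Since $\|BZ+c\|\le\|B\|\,\|Z\|+\|c\|$, these moments are then finite for $Y=h(Z)$ as well, so every quantity below is well defined.

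For the mean, write $Y=BZ+c$ and apply linearity of conditional expectation componentwise. The $i$-th coordinate is $Y_i=\sum_j B_{ij}Z_j+c_i$, so
\[
\E[Y_i\mid x]=\sum_j B_{ij}\E[Z_j\mid x]+c_i ,
\]
which is the first identity written in vector form. Equivalently, this follows from the change-of-variables formula for the pushforward $P^Y_\theta=h_{\phi,x}\push P^Z_\theta$, namely $\int y\,dP^Y=\int h(z)\,dP^Z$.

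For the covariance, subtract the two sides of the mean identity. This gives $Y-\E[Y\mid x]=B\bigl(Z-\E[Z\mid x]\bigr)$, because the constant $c$ cancels. Then
\[
\Cov(Y\mid x)=\E\!\left[B(Z-\E[Z\mid x])(Z-\E[Z\mid x])^\top B^\top\mid x\right].
\]
Pulling the deterministic matrices $B$ and $B^\top$ outside the conditional expectation, again by linearity applied entrywise, yields $B\Cov(Z\mid x)B^\top$.

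There is no real obstacle in this argument. The only care needed is to state the moment assumptions and to note that $B$ and $c$ are measurable functions of $x$ alone, so they factor out of the conditional expectation. As a remark, the affine null-space map of Eq.~\eqref{eq:null} is the special case $B=N$, $c=y_0$. This gives the covariance family $N\Sigma_Z N^\top$, of rank at most $q$, which is the family referred to in the DPPL comparison.
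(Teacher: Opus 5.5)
Your proof is correct, and it is the standard linearity-of-conditional-expectation argument that the paper relies on; the paper states this proposition without a written proof and treats it as an elementary fact. Your explicit first- and second-moment integrability assumptions, and the specialization $B=N$, $c=y_0$ giving the covariance family $N\Sigma_Z N^\top$, are consistent with how the paper later uses the result in the DPPL comparison.
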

The result provides exact first- and second-moment propagation for affine equalities and additive identities.

\subsection{Distributional propagation through structural maps}\label{app:dist-propagation}
Let
\begin{equation}
Z\mid x\sim P^Z_\theta(\cdot\mid x),\qquad Y=h(Z),
\end{equation}
with latent mean $\mu_Z=\E[Z\mid x]$ and covariance $\Sigma_Z=\Cov(Z\mid x)$. The feasible predictive distribution is exactly the pushforward $P^Y_\theta=h\push P^Z_\theta$. Whenever the moments exist,
\begin{align}
\mu_Y&=\E[h(Z)\mid x],\\
\Sigma_Y&=\E\!\left[(h(Z)-\mu_Y)(h(Z)-\mu_Y)^\top\mid x\right].
\end{align}
These identities are exact for every structural map; what changes across constraint families is whether they admit closed-form evaluation. For affine $h(z)=Bz+c$, the preceding proposition gives $\mu_Y=B\mu_Z+c$ and $\Sigma_Y=B\Sigma_ZB^\top$. For nonlinear maps, \SLPProbHard{} need not approximate the full predictive law as Gaussian: samples from $P^Z_\theta$ can be transformed through $h$ directly. This is especially important when a closed structural map induces discrete probability mass on feasible boundaries.

\paragraph{Positive-part boundary-mass implementation.}
Consider the primary boundary-mass implementation $R=\pospart{Z}=\max(0,Z)$ with
\begin{equation}
Z\sim\mathcal N(\mu,\sigma^2),\qquad \sigma>0,\qquad \alpha=\mu/\sigma,
\end{equation}
and let $\varphi$ and $\Phi$ denote the standard-normal density and distribution function. The pushforward has an atom at the feasible boundary,
\begin{equation}
\Pr(R=0)=\Phi(-\alpha),\label{eq:pospart-atom}
\end{equation}
and a continuous component on $(0,\infty)$. Its first two raw moments are
\begin{align}
m(\mu,\sigma)\equiv\E[R]
&=\sigma\varphi(\alpha)+\mu\Phi(\alpha),\label{eq:pospart-mean}\\
\E[R^2]
&=(\mu^2+\sigma^2)\Phi(\alpha)+\mu\sigma\varphi(\alpha),\label{eq:pospart-second}
\end{align}
so
\begin{equation}
v(\mu,\sigma)\equiv\Var(R)=\E[R^2]-m(\mu,\sigma)^2.\label{eq:pospart-var}
\end{equation}
These expressions follow by writing $Z=\mu+\sigma U$, $U\sim\mathcal N(0,1)$, and evaluating the first two Gaussian tail integrals over $U>-\alpha$. Thus the latent location and scale determine both ordinary predictive moments and the probability assigned exactly to the feasible boundary.

\paragraph{Weak ordering with independent Gaussian coordinates.}
For the primary ordering map,
\begin{align}
Y_1&=Z_1,\\
Y_k&=Z_1+\sum_{j=2}^{k}\pospart{Z_j},\qquad k=2,\ldots,d,
\end{align}
assume the primary diagonal-Gaussian latent model $Z_j\sim\mathcal N(\mu_j,\sigma_j^2)$ independently. Define $m_j=m(\mu_j,\sigma_j)$ and $v_j=v(\mu_j,\sigma_j)$ using Eqs.~\eqref{eq:pospart-mean}-\eqref{eq:pospart-var}. Then
\begin{align}
\boxed{\E[Y_k]}&=\mu_1+\sum_{j=2}^{k}m_j,\label{eq:order-mean}\\
\boxed{\Cov(Y_k,Y_\ell)}&=\sigma_1^2+\sum_{j=2}^{\min(k,\ell)}v_j.\label{eq:order-cov}
\end{align}
Moreover, the adjacent equality event has exact probability
\begin{equation}
\boxed{\Pr(Y_k=Y_{k-1})}=\Phi\!\left(-\frac{\mu_k}{\sigma_k}\right),\qquad k\ge2.\label{eq:order-tie}
\end{equation}
The moment identities follow from linearity and independence: two ordered outputs share $Z_1$ and exactly the increment terms up to $\min(k,\ell)$. Hence the network-predicted latent location and scale parameters directly determine the predictive probability of an exact tie.

\paragraph{Nonnegative ordering.}
For the composed map
\begin{equation}
Y_k=\sum_{j=1}^{k}\pospart{Z_j},\qquad k=1,\ldots,d,
\end{equation}
the same independent diagonal-Gaussian assumption gives
\begin{align}
\boxed{\E[Y_k]}&=\sum_{j=1}^{k}m_j,\\
\boxed{\Cov(Y_k,Y_\ell)}&=\sum_{j=1}^{\min(k,\ell)}v_j.
\end{align}
The adjacent-tie probability remains
\begin{equation}
\boxed{\Pr(Y_k=Y_{k-1})}=\Phi\!\left(-\frac{\mu_k}{\sigma_k}\right),\qquad k\ge2,
\end{equation}
while independence yields the exact zero-state probability
\begin{equation}
\boxed{\Pr(Y_k=0)}=\prod_{j=1}^{k}\Phi\!\left(-\frac{\mu_j}{\sigma_j}\right).\label{eq:nonneg-order-zero}
\end{equation}
Thus the same structural coordinates control both ordering ties and exact zero-valued states.

\paragraph{Boundary-reachable square map.}
For $R=Z^2$ with $Z\sim\mathcal N(\mu,\sigma^2)$,
\begin{align}
\E[R]&=\mu^2+\sigma^2,\\
\Var(R)&=2\sigma^4+4\mu^2\sigma^2.
\end{align}
Nevertheless, $\Pr(R=0)=0$ under a continuous Gaussian latent law. Boundary reachability and positive boundary probability therefore remain distinct even when output moments are analytically available.

\paragraph{Normalized boundary-mass simplex map.}
For
\begin{equation}
A_i=\pospart{Z_i},\qquad Y_i=\frac{A_i}{\sum_jA_j}
\end{equation}
whenever at least one coordinate is active, define the random active set $\mathcal A=\{i:Z_i>0\}$. Under conditionally independent Gaussian coordinates with $\sigma_i(x)>0$, for any fixed nonempty index set $S$,
\begin{equation}
\Pr(\mathcal A=S\mid x)=\prod_{i\in S}\Phi\!\left(\frac{\mu_i}{\sigma_i}\right)
\prod_{j\notin S}\Phi\!\left(-\frac{\mu_j}{\sigma_j}\right),\qquad S\neq\varnothing.\label{eq:simplex-active}
\end{equation}
Here and below the dependence of $\mu_i,\sigma_i$ on $x$ is suppressed. Conditional on $\mathcal A=S\ne\varnothing$, the predictive sample lies in the relative interior of the simplex face indexed by $S$. These are latent active-set probabilities. For a singleton $S=\{i\}$, the output vertex probability additionally includes mass routed to $e_i$ from the all-inactive event:
\[
\Pr(Y=e_i\mid x)=\Pr(\mathcal A=\{i\}\mid x)
+\Pr(\mathcal A=\varnothing,\ I(Z)=i\mid x),
\]
where $I(Z)$ is the fallback's maximizing index, including its tie-breaking convention. The all-inactive event has probability
\begin{equation}
p_{\varnothing}(x)=\prod_{j=1}^{d}\Phi\!\left(-\frac{\mu_j}{\sigma_j}\right),
\end{equation}
and is handled by the feasible argmax-vertex fallback in Appendix~\ref{app:simplex-map}. Thus the structural pushforward is a mixture over simplex faces, with the additional fallback contribution included at the vertices. The mean can be decomposed as
\begin{equation}
\E[Y\mid x]=\sum_{S\neq\varnothing}\Pr(\mathcal A=S\mid x)\E[Y\mid\mathcal A=S,x]
+p_{\varnothing}(x)\E[Y\mid\mathcal A=\varnothing,x],
\end{equation}
with covariance obtained analogously by the law of total covariance. The conditional ratio moments generally do not reduce to simple elementary closed forms and are evaluated by Monte Carlo or numerical integration.

\paragraph{Circular structural map.}
If $Z\sim\mathcal N(\mu,\sigma^2)$ and $Y=(\cos Z,\sin Z)$, then the first two moments are exact:
\begin{align}
\E[\cos Z]&=e^{-\sigma^2/2}\cos\mu, &
\E[\sin Z]&=e^{-\sigma^2/2}\sin\mu,\\
\E[\cos^2 Z]&=\tfrac12\!\left(1+e^{-2\sigma^2}\cos 2\mu\right), &
\E[\sin^2 Z]&=\tfrac12\!\left(1-e^{-2\sigma^2}\cos 2\mu\right),\\
\E[\sin Z\cos Z]&=\tfrac12 e^{-2\sigma^2}\sin 2\mu.
\end{align}
Thus the circle benchmark supplies a nonlinear pushforward whose moments remain analytically tractable even though the induced Cartesian law is non-Gaussian.

\paragraph{Smooth nonlinear maps.}
For a differentiable structural map $h$ with latent mean $\mu_Z$ and covariance $\Sigma_Z$, the first-order delta approximation~\citep{oehlert1992} is
\begin{align}
\E[Y]&\approx h(\mu_Z),\\
\Cov(Y)&\approx J_h(\mu_Z)\Sigma_ZJ_h(\mu_Z)^\top.
\end{align}
This approximation is optional rather than required for feasibility. Exact pushforward sampling remains available whenever moments or densities are analytically inconvenient. In particular, a Gaussian moment approximation should not replace the full pushforward law when boundary atoms or face-supported probability mass are scientifically relevant.

\begin{table}[H]
\centering
\scriptsize
\begin{tabularx}{\linewidth}{p{0.18\linewidth}YYY}
\toprule
Structural map & Predictive-law structure & Moment propagation & Boundary characterization\\
\midrule
Affine equality & affine pushforward & exact mean/covariance & feasible affine manifold\\
Positive-part boundary-mass & atom at zero + continuous positive component & exact for Gaussian latent & $\Pr(Y=0)=\Phi(-\mu/\sigma)$\\
Weak ordering & cumulative mixed law & exact for independent Gaussian coordinates & exact adjacent-tie probabilities\\
Nonnegative ordering & cumulative mixed law & exact for independent Gaussian coordinates & exact zero and tie probabilities\\
Square & squared-Gaussian transform & exact Gaussian moments & boundary reachable, zero mass\\
Boundary-mass normalized simplex & mixture over simplex faces & conditional/numerical moments & analytic active-set probabilities\\
Smooth generic map & nonlinear pushforward & delta approximation or Monte Carlo & map-dependent\\
\bottomrule
\end{tabularx}
\caption{Distributional propagation under representative structural maps. Analytic tractability is separate from hard feasibility: every listed map is defined through its pushforward law, while closed-form summaries are used when available.}
\label{tab:dist-propagation}
\end{table}

\subsection{Samplewise differentiation and expected-score gradients}\label{app:pathwise}

\begin{proposition}[Samplewise chain rule]
Fix $x$ and $\theta_0$, and let $h$ be a fixed structural map. Assume that $\theta\mapsto z_\theta(x,\epsilon)$ is differentiable at
$\theta_0$ almost surely and that $h$ is differentiable at $z_{\theta_0}(x,\epsilon)$ almost surely.
Then $\theta\mapsto h(z_\theta(x,\epsilon))$ is differentiable at $\theta_0$ almost surely, with
\[
D_\theta h(z_\theta(x,\epsilon))\big|_{\theta_0}
=
J_h(z_{\theta_0}(x,\epsilon))
D_\theta z_\theta(x,\epsilon)\big|_{\theta_0}.
\]
\end{proposition}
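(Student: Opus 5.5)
The plan is to reduce the statement to the ordinary (deterministic) chain rule, applied pointwise in $\epsilon$, and then check that the "almost surely" qualifiers combine correctly. Let $E_1$ be the set of $\epsilon$ for which $\theta\mapsto z_\theta(x,\epsilon)$ is differentiable at $\theta_0$. Let $E_2$ be the set of $\epsilon$ for which $h$ is differentiable at the point $z_{\theta_0}(x,\epsilon)$. By hypothesis each has probability one under the law of $\epsilon$. If measurability of these sets is a concern, I would work with measurable full-measure subsets contained in them. Their intersection $E=E_1\cap E_2$ is then also of full measure, since a countable (here, finite) union of null sets is null.

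Next, fix any $\epsilon\in E$. Write $g(\theta)=z_\theta(x,\epsilon)$, a map from parameter space to $\Zset\subseteq\R^q$, differentiable at $\theta_0$. Let $h$ be differentiable (in the Fréchet sense) at $g(\theta_0)$. The standard chain rule then applies: $h\circ g$ is differentiable at $\theta_0$, and $D(h\circ g)(\theta_0)=J_h(g(\theta_0))\,Dg(\theta_0)$. This is exactly the displayed formula.

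To be self-contained I would sketch the chain rule in the form needed:
\[
h(g(\theta))-h(g(\theta_0))=J_h(g(\theta_0))\bigl(g(\theta)-g(\theta_0)\bigr)+r\bigl(g(\theta)\bigr),
\]
where $\|r(w)\|=o(\|w-g(\theta_0)\|)$. I would combine this with $g(\theta)-g(\theta_0)=Dg(\theta_0)(\theta-\theta_0)+o(\|\theta-\theta_0\|)$. The differentiability of $g$ also gives $\|g(\theta)-g(\theta_0)\|=O(\|\theta-\theta_0\|)$, which makes the remainder $o(\|\theta-\theta_0\|)$. Since $\epsilon\in E$ was arbitrary and $P(E)=1$, the conclusion holds almost surely.

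The only delicate point is the interpretation of "almost surely" in the hypothesis on $h$. The hypothesis must mean differentiability of $h$ at the \emph{random} point $z_{\theta_0}(x,\epsilon)$, i.e.\ outside an $\epsilon$-null set. It does not mean differentiability on a Lebesgue-full set of $\Zset$. For maps such as $\pospart{\cdot}$, the latter only transfers to the former when the law of $z_{\theta_0}(x,\epsilon)$ is absolutely continuous. I would state this reading explicitly. I would also remark that the proposition gives only samplewise differentiability and says nothing about interchanging derivative and expectation. That interchange requires additional domination or Lipschitz conditions, consistent with the paper's caveat about discontinuous fallbacks.
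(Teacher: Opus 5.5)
Your proof is correct and follows the same route as the paper, which simply applies the ordinary chain rule on the probability-one event where both derivatives exist. Intersecting the two full-measure sets and writing out the remainder estimate, as you do, just spells that argument out in more detail.
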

\begin{proof}
Apply the ordinary chain rule on the probability-one event where both
derivatives exist.
\end{proof}

This samplewise statement does not by itself justify differentiating
an expected score. Let
$L(\theta)=\mathbb E_\epsilon[\ell(\theta,\epsilon)]$, where $\ell$
may include a complete predictive ensemble and its sample-based score.
A sufficient condition for differentiation at $\theta_0$ is that
$\ell(\theta_0,\epsilon)$ is integrable, that
$\ell(\cdot,\epsilon)$ is differentiable at $\theta_0$ almost surely,
and that it is locally Lipschitz in a neighborhood of $\theta_0$
with an integrable random Lipschitz bound. Dominated convergence
applied to difference quotients then gives
\[
\nabla_\theta L(\theta_0)
=
\mathbb E_\epsilon[
\nabla_\theta\ell(\theta_0,\epsilon)].
\]
The gradient of an expected finite-ensemble score is a separate
question from whether that score estimator is unbiased for a population scoring rule. The positive-part map is continuous and globally Lipschitz, despite its kink at zero. The normalized positive-part simplex map, however, contains an argmax-vertex fallback on the all-inactive region and is
discontinuous at some boundaries between fallback choices. Almost-everywhere samplewise differentiation therefore does not automatically establish unbiased expected-score gradients for the complete map. The corresponding estimator requires separate
justification or an explicitly reported approximation. Reparameterization methods for the latent sampler~\citep{figurnov2018,jankowiak2018} do not by themselves resolve discontinuities in the structural map. A small reported fallback frequency does not establish negligible gradient bias.

\subsection{Affine rank-nullity}
\begin{proposition}[Free-coordinate dimension]
Let $A\in\R^{m\times d}$ have rank $m<d$ and assume $Ay=b$ is consistent. Then $\Cset=\{y:Ay=b\}$ has dimension $q=d-m$. If $N\in\R^{d\times q}$ has linearly independent columns spanning $\ker(A)$, every feasible point has a unique representation $y=y_0+Nz$.
\end{proposition}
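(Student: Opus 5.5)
The plan is to realize $\Cset$ as a translate of $\ker(A)$ and then combine rank–nullity with the fact that $N$ is a basis of that kernel.

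\textbf{Step 1: translate structure.} Since $Ay=b$ is consistent, I fix a particular solution $y_0$ with $Ay_0=b$. For any $y\in\Cset$, linearity gives $A(y-y_0)=b-b=0$, so $y-y_0\in\ker(A)$. Conversely, if $v\in\ker(A)$, then $A(y_0+v)=b$. Hence
\[
\Cset=y_0+\ker(A),
\]
an affine subspace whose dimension is, by definition, $\dim\ker(A)$.

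\textbf{Step 2: dimension count.} Rank–nullity for the linear map $A:\R^d\to\R^m$ gives $\dim\ker(A)=d-\operatorname{rank}(A)=d-m=q$. The hypothesis $m<d$ ensures $q\ge1$, so the representation is nontrivial. This yields the first claim.

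\textbf{Step 3: existence and uniqueness of coordinates.} The columns of $N$ are linearly independent and span $\ker(A)$, so they form a basis of that $q$-dimensional space; in particular $N$ has exactly $q$ columns, consistent with $N\in\R^{d\times q}$. Given any feasible $y$, Step 1 puts $y-y_0$ in $\ker(A)$, so there exists $z\in\R^q$ with $y-y_0=Nz$. For uniqueness, suppose $y_0+Nz=y_0+Nz'$. Then $N(z-z')=0$, and linear independence of the columns forces $z=z'$.

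Two points need care. First, the unique representation is relative to a fixed choice of $y_0$; a different feasible base point shifts $z$ by a constant but preserves uniqueness. Second, "dimension" of $\Cset$ must be read as the dimension of its direction space $\ker(A)$. With these stated explicitly, every step is elementary and no step is a genuine obstacle. As a consequence, the map $z\mapsto y_0+Nz$ is an affine bijection $\R^q\to\Cset$, which is why Eq.~\eqref{eq:null} is both surjective and free of redundant coordinates.
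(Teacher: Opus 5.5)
Your proof is correct and follows the paper's own argument: rank--nullity gives $\dim\ker(A)=d-m$, feasibility puts $y-y_0$ in $\ker(A)$ so that it equals $Nz$, and linear independence of the columns of $N$ makes $z$ unique. Your explicit check of the reverse inclusion $y_0+\ker(A)\subseteq\Cset$ is a small completeness addition that the paper leaves implicit.
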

\begin{proof}
The rank-nullity theorem gives $\dim\ker(A)=d-m$. For any feasible $y$, $A(y-y_0)=0$, so $y-y_0\in\ker(A)$ and equals $Nz$; linear independence makes $z$ unique.
\end{proof}

\subsection{Constrained universal approximation}
\begin{theorem}[Approximation consequence under a continuous structural lifting]\label{thm:structural-ua}
Let $\Xset\subset\R^p$ be compact and $h:\Xset\times\R^q\rightarrow\R^d$ continuous with $h(x,z)\in\Cset(x)$. Let $F^\star:\Xset\rightarrow\R^d$ be continuous and feasible. Assume there exists a continuous lifting $z^\star:\Xset\rightarrow\R^q$ with $F^\star(x)=h(x,z^\star(x))$, and that the latent network family is a universal approximator of continuous maps into $\R^q$. Then $h(x,f_\theta(x))$ can uniformly approximate $F^\star$ while remaining feasible for all $x$.
\end{theorem}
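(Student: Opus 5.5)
The argument reduces everything to uniform continuity of $h$ on a compact set, combined with the feasibility that $h$ provides by construction. Feasibility is immediate and needs no approximation: by hypothesis $h(x,z)\in\Cset(x)$ for every $z\in\R^q$, so $h(x,f_\theta(x))\in\Cset(x)$ for every $x$ and every $\theta$. This is the same image-condition reasoning as in the Feasibility-under-pushforward proposition, now applied deterministically. What remains is uniform approximation of $F^\star$.

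\textbf{Step 1: compactify the latent domain.} Since $z^\star$ is continuous on the compact $\Xset$, its image is compact. Choose $M$ with $\|z^\star(x)\|\le M$ for all $x$, and let
\[
K=\Xset\times\bar B(0,M+1),
\]
which is compact. Because $h$ is continuous on $\Xset\times\R^q$, its restriction to $K$ is uniformly continuous.

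\textbf{Step 2: choose the latent tolerance.} Fix $\varepsilon>0$. By uniform continuity there is $\delta\in(0,1]$ such that for $(x,z),(x,z')\in K$ with $\|z-z'\|<\delta$ we have
\[
\|h(x,z)-h(x,z')\|<\varepsilon.
\]
The universal-approximation hypothesis supplies $\theta$ with
\[
\sup_{x\in\Xset}\|f_\theta(x)-z^\star(x)\|<\delta .
\]
Since $\delta\le 1$, both $(x,f_\theta(x))$ and $(x,z^\star(x))$ lie in $K$.

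\textbf{Step 3: conclude.} For every $x$,
\[
\|h(x,f_\theta(x))-F^\star(x)\|=\|h(x,f_\theta(x))-h(x,z^\star(x))\|<\varepsilon,
\]
so the approximation is uniform. Combined with the feasibility noted at the start, this proves the claim.

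The main obstacle is Step 1. Continuity of $h$ alone is not uniform on the unbounded set $\Xset\times\R^q$, so we must confine the approximant's outputs to a compact neighbourhood of the graph of the lifting $z^\star$. Taking $\delta\le 1$ does exactly this. The remaining subtlety lies in the hypotheses rather than the proof: existence of a continuous lifting $z^\star$ is assumed, not derived. For non-surjective maps or maps with topological obstructions, such as the circle map, where a continuous angle lifting may fail to exist, the theorem simply does not apply. I would note this as the source of the representation bias discussed earlier.
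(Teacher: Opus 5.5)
Your proof is correct and has the same skeleton as the paper's: approximate the lifting $z^\star$ in latent space, push the approximant through $h$, and get feasibility for free from the image condition $h(x,z)\in\Cset(x)$. You differ in how latent error is turned into output error. The paper gives no formal proof. It records the bound $\sup_x\|h(x,f_\theta(x))-F^\star(x)\|\le L_h\sup_x\|f_\theta(x)-z^\star(x)\|$ under the \emph{extra} assumption that $h$ is uniformly $L_h$-Lipschitz in $z$, and otherwise calls the theorem a direct consequence of universal approximation. That Lipschitz hypothesis is not among the theorem's assumptions, which ask only for continuity of $h$.

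Your localization step supplies exactly what the stated continuous case needs. You bound $z^\star(\Xset)$ by $M$, apply Heine--Cantor on the compact tube $\Xset\times\bar B(0,M+1)$, and take $\delta\le 1$ so that $(x,f_\theta(x))$ never leaves the tube. This also covers representations the Lipschitz bound misses. An example is the boundary-reachable ordering increment $\psibr(z)=z^2$, which is continuous and has a continuous lifting but is not globally Lipschitz.

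When the paper's extra assumption holds, its route gives a quantitative statement with no localization step: any uniform rate for approximating $z^\star$ transfers linearly to $F^\star$. This covers the affine map $y_0+Nz$ with $L_h=\|N\|$, the cumulative positive-part maps, and the angle map. Your argument is purely qualitative, since $\delta$ depends on the modulus of continuity of $h$ on the tube.

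One small correction to your closing aside. The paper reserves ``parameterization bias'' for non-surjective maps. The circle map is surjective and fails only to have a continuous global section. So that obstruction concerns approximation through a continuous lifting, not representation bias.
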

If $h$ is uniformly $L_h$-Lipschitz in $z$,
\begin{equation}
\sup_x\|h(x,f_\theta(x))-F^\star(x)\|\le L_h\sup_x\|f_\theta(x)-z^\star(x)\|.
\end{equation}
The unconstrained approximation ingredient follows from standard universal-approximation results~\citep{cybenko1989,hornik1991,leshno1993}. This theorem is a direct structural consequence of standard universal approximation once a continuous feasible lifting exists; its role here is to characterize when a chosen representation preserves approximation capacity. The substantive assumption is therefore the existence of the lifting, a property of the chosen representation.

\begin{corollary}[Approximation for weak-order and simplex targets]\label{cor:explicit-liftings}
Let $X$ be compact and let the latent network family uniformly
approximate continuous maps into the required Euclidean latent space.
Every continuous weakly ordered target can be uniformly approximated
through the primary weak-order map. Every continuous simplex-valued
target can also be uniformly approximated through the normalized
positive-part map with its feasible fallback.
\end{corollary}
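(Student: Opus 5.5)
The plan is to derive both statements from Theorem~\ref{thm:structural-ua}. For each map we exhibit an explicit continuous feasible lifting $z^\star$. Because the simplex map is discontinuous where its fallback activates, that case needs a localized version of the theorem's continuity argument.

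\emph{Weak ordering.} Let $F^\star=(F^\star_1,\ldots,F^\star_d)$ be continuous on the compact set $X$ with $F^\star_1(x)\le\cdots\le F^\star_d(x)$. Define
\[
z^\star_1(x)=F^\star_1(x),\qquad z^\star_k(x)=F^\star_k(x)-F^\star_{k-1}(x)\quad(k\ge2).
\]
These are continuous, and $z^\star_k\ge0$ for $k\ge2$, so $\pospart{z^\star_k}=z^\star_k$. The cumulative map therefore telescopes to $h(z^\star(x))=F^\star(x)$.

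The primary weak-order map $y_1=z_1$, $y_k=y_{k-1}+\pospart{z_k}$ is continuous and feasible for every $z$. It is also globally Lipschitz: each $y_k$ is a sum of $1$-Lipschitz functions of single coordinates, giving, e.g., $L_h\le\sqrt{d}\cdot\sqrt{d}=d$ in Euclidean norm. Theorem~\ref{thm:structural-ua} then applies directly. The Lipschitz bound after it yields the explicit rate $\sup_x\|h(f_\theta(x))-F^\star(x)\|\le L_h\sup_x\|f_\theta(x)-z^\star(x)\|$.

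\emph{Simplex.} Take the lifting $z^\star(x)=F^\star(x)$, as in the surjectivity remark of Appendix~\ref{app:simplex-map}. Since $F^\star_i\ge0$ and $\sum_iF^\star_i=1$, we get $S(z^\star(x))=1$ and $h(z^\star(x))=F^\star(x)$ with no fallback. The difficulty is that $h$ is not continuous on all of $\R^d$, because the argmax-vertex fallback on $\{S=0\}$ jumps. So Theorem~\ref{thm:structural-ua} cannot be quoted verbatim, and we instead repeat its proof on a region that avoids the fallback.

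\begin{itemize}
\item Set $K=\{z:\operatorname{dist}(z,z^\star(X))\le\rho\}$ with $\rho=1/(2\sqrt d)$. This set is compact, since $z^\star(X)$ is compact.
\item Because $S$ is $\sqrt d$-Lipschitz, every $z\in K$ satisfies $S(z)\ge 1-\sqrt d\,\rho=1/2$. On $K$ the map is therefore the quotient $\pospart{z}/S(z)$ of continuous functions with denominator bounded away from zero, hence uniformly continuous (indeed Lipschitz).
\item Given $\varepsilon>0$, choose $\delta\le\rho$ from uniform continuity of $h$ on $K$. By universality, pick $f_\theta$ with $\sup_x\|f_\theta(x)-z^\star(x)\|<\delta$.
\item Then $f_\theta(x)\in K$ for all $x$, so the fallback is never invoked and $\sup_x\|h(f_\theta(x))-F^\star(x)\|<\varepsilon$. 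Feasibility holds for all $x$ because $h$ maps every $z$, fallback included, into $\Delta^{d-1}$.
\end{itemize}

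The main obstacle is this simplex step. The theorem's hypothesis of a globally continuous $h$ fails, and the fix is the observation that a lifting with $S(z^\star)\equiv1$ keeps uniformly accurate approximants inside a compact region where the fallback is inactive. A secondary point is that the latent family must approximate into $\R^d$, the computational dimension, rather than into the intrinsic dimension $d-1$. This is consistent with the corollary's phrase ``required Euclidean latent space.''
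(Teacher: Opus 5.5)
Your proof is correct and follows essentially the same route as the paper. For the weak-order map you use the same difference lifting and appeal directly to Theorem~\ref{thm:structural-ua}; for the simplex you use the identity lifting $z^\star=F^\star$ and observe that uniformly close approximants keep $S\ge 1/2$, so the discontinuous fallback is never invoked. The only difference is in the final simplex step: the paper uses an explicit triangle-inequality bound $\|h_\Delta(f_\theta(x))-F^\star(x)\|_1\le 2d\delta$ with sup-norm accuracy $\delta<1/(2d)$, whereas you use uniform (indeed Lipschitz) continuity of $\pospart{z}/S(z)$ on a compact Euclidean neighborhood of radius $1/(2\sqrt d)$, which is equally valid.
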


\begin{proof}
For a weakly ordered vector $y$, the continuous lifting
\[
s_{\mathrm{ord}}(y)
=
(y_1,y_2-y_1,\ldots,y_d-y_{d-1})
\]
satisfies $h_{\mathrm{ord}}(s_{\mathrm{ord}}(y))=y$.
The weak-order map is continuous, so Theorem~\ref{thm:structural-ua} applies.

For a simplex-valued target $F^\star$, use the continuous lifting
$z^\star(x)=F^\star(x)$. Choose a latent approximator satisfying
\[
\sup_{x\in X}
\|f_\theta(x)-F^\star(x)\|_\infty
<\delta<\frac{1}{2d}.
\]
Write $a(x)=[f_\theta(x)]_+$ and $S(x)=\sum_i a_i(x)$.
Since $F^\star_i(x)\ge0$ and $\sum_iF^\star_i(x)=1$,
\[
S(x)\ge1-d\delta>\frac12.
\]
The fallback is therefore never used. Moreover,
\begin{align*}
\|h_\Delta(f_\theta(x))-F^\star(x)\|_1
&\le
\|a(x)/S(x)-a(x)\|_1
+\|a(x)-F^\star(x)\|_1\\
&=
|1-S(x)|+\|a(x)-F^\star(x)\|_1\\
&\le
2\|a(x)-F^\star(x)\|_1\\
&\le 2d\delta.
\end{align*}
Taking $\delta$ arbitrarily small proves uniform approximation.
This local argument does not require global continuity of the
fallback map.
\end{proof}

\begin{remark}[Circle topology]
The circle map $h_{\mathrm{circ}}(z)=(\cos z,\sin z)$ is continuous and surjective, but it admits no global continuous section $S^1\to\R$; this is the standard covering-space/topological obstruction associated with the angle map~\citep{hatcher2002}: a globally continuous choice of angle would contradict the topology of the circle. A measurable branch-cut section exists, so surjectivity still supports the measurable distributional-approximation result below, while Theorem~\ref{thm:structural-ua} does not apply globally through a continuous angle lifting. This separates map surjectivity from existence of a continuous inverse/section.
\end{remark}

\subsection{Distributional approximation and parameterization bias}
\begin{theorem}[Classical pushforward approximation consequence under a measurable section]
Let $h:\Zset\to\Cset$ be measurable and surjective between standard Borel spaces, and suppose a measurable section $s:\Cset\to\Zset$ exists with $h(s(y))=y$. If a latent family is dense in probability measures on $\Zset$ under total variation, then its pushforward family is dense in probability measures on $\Cset$ under total variation.
\end{theorem}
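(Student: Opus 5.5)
The argument pushes the target measure back through the measurable section, approximates it in latent space, and then pushes forward again. Let $Q$ be any probability measure on $\Cset$ (with the Borel $\sigma$-algebra inherited from the standard Borel structure), and fix $\varepsilon>0$. The section $s:\Cset\to\Zset$ is measurable, so the measure $\nu:=s\push Q$ on $\Zset$ is well defined. The identity $h\circ s=\mathrm{id}_{\Cset}$ then gives
\[
h\push\nu=(h\circ s)\push Q=Q .
\]
So every target law on $\Cset$ is exactly the pushforward of some latent law. Surjectivity is what makes such a section possible; we take its existence as a hypothesis rather than deriving it.

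Next, by the density assumption we choose a latent law $P^Z$ in the family with $\|P^Z-\nu\|_{\mathrm{TV}}<\varepsilon$. The key step is that pushforward by a measurable map does not increase total variation:
\[
\|h\push P^Z-h\push\nu\|_{\mathrm{TV}}=\sup_{B}\bigl|P^Z(h^{-1}(B))-\nu(h^{-1}(B))\bigr|\le\sup_{A}|P^Z(A)-\nu(A)| .
\]
The first supremum runs over Borel $B\subseteq\Cset$. The inequality holds because each preimage $h^{-1}(B)$ is a measurable subset of $\Zset$, so the first supremum is taken over a sub-collection of the sets in the second. Combining the two steps gives $\|h\push P^Z-Q\|_{\mathrm{TV}}<\varepsilon$. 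Since $\varepsilon$ and $Q$ were arbitrary, the pushforward family is TV-dense, and this is the same preimage identity as Eq.~\eqref{eq:boundary}.

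I expect the only point needing care to be measurability bookkeeping, not any estimate. We must check that $\Cset$ carries a $\sigma$-algebra for which $h$ and $s$ are both measurable, and that $s\push Q$ is a genuine probability measure on $\Zset$. Both follow from the standard Borel hypotheses, and the contraction argument works for any measurable $h$.

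I would also add a remark. If only surjectivity were assumed, one would invoke a measurable selection theorem, for example Jankov--von Neumann, which gives a universally measurable section; $Q$ is then extended to its completion. I will not pursue this, because the statement assumes the section outright. Applied to the circle remark, the branch-cut section $y\mapsto\operatorname{atan2}(y_2,y_1)$ satisfies the hypothesis directly.
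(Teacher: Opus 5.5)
Your proposal is correct and follows the paper's proof essentially step for step. Like the paper, you pull the target law back through the section and use $h\circ s=\mathrm{id}$ to get $h\push(s\push Q)=Q$. You then approximate $s\push Q$ in total variation within the latent family, and bound the pushforward distance by the latent one because preimages $h^{-1}(B)$ are measurable subsets of $\Zset$.
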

\begin{proof}
Let $\nu$ be a probability measure on $\mathcal C$ and define
$P=s_\#\nu$. Since $h\circ s$ is the identity on $\mathcal C$,
$h_\#P=\nu$. By the assumed total-variation density, choose latent
laws $P_n$ with $\|P_n-P\|_{\mathrm{TV}}\to0$. Then
\[
\|h_\#P_n-\nu\|_{\mathrm{TV}}
=
\sup_{B}
|P_n(h^{-1}(B))-P(h^{-1}(B))|
\le
\|P_n-P\|_{\mathrm{TV}}
\to0,
\]
where the supremum is over measurable subsets of $\mathcal C$.
\end{proof}

This result is conditional on the stated density assumption and is
not a universal-expressiveness guarantee for the Gaussian heads
used in the experiments. In particular, absolutely continuous
latent families cannot approximate every atomic probability measure
on Euclidean latent space in total variation.

This standard pushforward consequence is included to characterize distributional expressiveness of a chosen representation, not as a new measure-theoretic result.

\begin{proposition}[Wasserstein stability]
If $h$ is $L_h$-Lipschitz, then $W_p(h\push P,h\push Q)\le L_hW_p(P,Q)$ whenever the moments exist, using the standard Wasserstein/optimal-transport metric framework~\citep{villani2009}. More generally, transport-map formulations provide a useful probability-theoretic lens for deterministic pushforwards of latent laws~\citep{marzouk2016}.
\end{proposition}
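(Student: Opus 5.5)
The plan is the standard coupling argument. Fix $p\ge1$ and probability measures $P,Q$ on $\Zset$ with finite $p$-th moments. Let $L_h$ be the Lipschitz constant of $h$ with respect to the metrics $d_\Zset$ and $d_\Yset$ used in the two Wasserstein distances; in the Euclidean setting of the paper these are the norms. First I check that the pushforwards have finite $p$-th moments, so that $W_p(h\push P,h\push Q)$ is defined. For a fixed $z_0$, the Lipschitz bound gives $\|h(z)\|\le\|h(z_0)\|+L_h\|z-z_0\|$, hence $\int\|y\|^p\,d(h\push P)(y)=\E_P\|h(Z)\|^p<\infty$, and likewise for $Q$. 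This is the precise meaning of ``whenever the moments exist'': finiteness on the latent side suffices.

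The key step pushes couplings forward. Let $\pi$ be any coupling of $P$ and $Q$, and set $\tilde\pi=(h\times h)\push\pi$ on $\Yset\times\Yset$. Its marginals are $h\push P$ and $h\push Q$, because projection commutes with $h\times h$: $(\mathrm{pr}_1)\push\tilde\pi=h\push(\mathrm{pr}_1)\push\pi=h\push P$, and similarly for the second marginal. So $\tilde\pi$ is an admissible coupling, and by the change-of-variables formula for pushforwards
\[
W_p(h\push P,h\push Q)^p\le\int\|y-y'\|^p\,d\tilde\pi=\int\|h(z)-h(z')\|^p\,d\pi\le L_h^p\int\|z-z'\|^p\,d\pi .
\]
Taking the infimum over $\pi$ gives $W_p(h\push P,h\push Q)^p\le L_h^pW_p(P,Q)^p$, and taking $p$-th roots proves the claim. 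Optimal couplings exist but are not needed, since the infimum is enough. The case $p=\infty$, if it is wanted, follows in the same way using essential suprema.

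The only delicate points are measurability and the setting. The map $h\times h$ is continuous, hence Borel, so $\tilde\pi$ is well defined. If the paper intends $h=h_{\phi,x}$ depending on $x$, the argument is applied pointwise for each fixed $x$ with $L_h$ uniform in $x$. I expect no real obstacle; the main care needed is to state the moment condition on the latent laws and to note that it transfers to the pushforwards.
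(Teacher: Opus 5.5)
Your proposal is correct. You push an arbitrary coupling $\pi$ of $P$ and $Q$ forward by $h\times h$ to get an admissible coupling of $h\push P$ and $h\push Q$, then take the infimum; this is exactly the standard argument behind the proposition, which the paper does not prove but attributes to the optimal-transport framework. Your moment-transfer and measurability checks supply details the paper leaves implicit, and for a fixed $x$ you only need $h_{\phi,x}$ itself to be Lipschitz, so uniformity of $L_h$ in $x$ is not required.
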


\begin{proposition}[Parameterization bias]
Let $h:\mathcal Z\to\mathcal C$ be measurable, and assume
$H=h(\mathcal Z)$ is measurable. Every pushforward law
$P^Y=h_\#P^Z$ satisfies $P^Y(H)=1$. Consequently, any target law
$\nu$ with $\nu(\mathcal C\setminus H)>0$ cannot be represented
exactly by this map.
\end{proposition}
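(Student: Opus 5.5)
The plan is to argue directly from the definition of the pushforward, in the same spirit as the proof of the feasibility-under-pushforward proposition. The claim has two parts: every pushforward law concentrates on the image $H=h(\mathcal Z)$, and consequently no such law can equal a target $\nu$ that puts positive mass outside $H$.

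\textbf{Step 1: $P^Y(H)=1$.} Since $H$ is assumed measurable, the preimage $h^{-1}(H)$ is a measurable subset of $\mathcal Z$. Every $z\in\mathcal Z$ satisfies $h(z)\in H$ by definition of the image, so $h^{-1}(H)=\mathcal Z$. The pushforward identity~\eqref{eq:boundary} then gives $P^Y(H)=P^Z(h^{-1}(H))=P^Z(\mathcal Z)=1$. This holds for every latent law $P^Z$, so it is a property of the map alone. Equivalently, $h^{-1}(\mathcal C\setminus H)=\varnothing$, hence $P^Y(\mathcal C\setminus H)=0$.

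\textbf{Step 2: the non-representability consequence.} I will argue by contradiction. Suppose $\nu=h_\#P^Z$ for some latent law $P^Z$. Step~1 gives $\nu(\mathcal C\setminus H)=0$, which contradicts the assumption $\nu(\mathcal C\setminus H)>0$. Here $\mathcal C\setminus H$ is measurable because both $\mathcal C$ and $H$ are. So no choice of latent law, Gaussian or otherwise, represents $\nu$ exactly through this map. This is what makes the bias a property of the map rather than of the latent family.

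\textbf{Main obstacle.} The argument itself is immediate; the only delicate point is measurability. Images of measurable maps need not be Borel in general (they are analytic sets between standard Borel spaces). That is exactly why the statement assumes $H$ is measurable. I will point this out, and note that without the assumption one could still phrase the result through outer measure, $\nu^*(\mathcal C\setminus H)>0$, with the same conclusion.

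I may also add a remark that the bias is quantitative and not merely exact. For any $P^Z$,
\[
\|h_\#P^Z-\nu\|_{\mathrm{TV}}\ge \nu(\mathcal C\setminus H),
\]
so no latent family can approximate $\nu$ below this floor in total variation. This contrasts with the surjective case covered by the preceding measurable-section theorem.
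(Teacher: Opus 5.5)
Your proof is correct and follows the paper's argument. Everything rests on $h^{-1}(H)=\mathcal Z$, hence $P^Y(H)=1$ for every latent law, and your explicit contradiction step and the total-variation floor $\|h_\#P^Z-\nu\|_{\mathrm{TV}}\ge\nu(\mathcal C\setminus H)$ (take $B=\mathcal C\setminus H$ in the supremum) are valid elaborations that the paper leaves implicit.

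The only thing to tighten is your outer-measure aside. It holds when $H$ is universally measurable (e.g.\ analytic, as in the standard Borel setting you mention), but not for an arbitrary measurable $\mathcal Z$. Counterexample: take the inclusion of a Bernstein set $H\subset[0,1]$, equipped with the trace $\sigma$-algebra and the trace of Lebesgue measure. It pushes forward exactly to Lebesgue measure, even though $[0,1]\setminus H$ has outer measure $1$.
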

\begin{proof}
Because $h(z)\in H$ for every $z\in\mathcal Z$,
$h^{-1}(H)=\mathcal Z$, and hence $P^Y(H)=1$.
\end{proof}

Reachability, topological support, and event probability are distinct.
A point outside $h(\mathcal Z)$ may belong to the topological support
as a limit point. For example, softplus never outputs zero in real
arithmetic, but zero belongs to the topological support of its
pushforward under a nondegenerate Gaussian latent law.

\subsection{Classical identity used as a reachability and boundary-mass taxonomy}
\begin{proposition}[Boundary-preimage identity]\label{prop:boundary}
Let $B\subseteq\Cset$ be measurable and let $P^Y=h\push P^Z$. Then
\begin{equation}
P^Y(B)=P^Z\!\left(h^{-1}(B)\right).
\end{equation}
Consequently: (i) if $h^{-1}(B)=\varnothing$, the boundary event is excluded; (ii) if $h^{-1}(B)$ is nonempty but has zero latent probability, the boundary is reachable but receives no predictive mass; and (iii) if the preimage has positive latent probability, the predictive law has positive boundary mass.
\end{proposition}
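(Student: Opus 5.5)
The identity itself is just the definition of the pushforward measure, so the plan is to write it down and then read off the three cases. First I would confirm that the right-hand side is well defined. Since $h:\Zset\to\Yset$ is measurable (as in the earlier proposition on feasibility under pushforward) and $B\subseteq\Cset\subseteq\Yset$ is measurable, the preimage $h^{-1}(B)$ is a measurable subset of $\Zset$, so $P^Z(h^{-1}(B))$ makes sense. By definition $h\push P^Z$ is the set function $B\mapsto P^Z(h^{-1}(B))$; it is a probability measure because preimages commute with complements and countable disjoint unions. Evaluating it at the boundary event $B$ gives exactly $P^Y(B)=P^Z(h^{-1}(B))$, which is Eq.~\eqref{eq:boundary}.

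Next I would derive the three consequences as a case split on the preimage $E=h^{-1}(B)$. These cases are exhaustive and mutually exclusive: either $E$ is empty, or it is nonempty with $P^Z(E)=0$, or $P^Z(E)>0$.

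In case (i), $E=\varnothing$ gives $P^Y(B)=P^Z(\varnothing)=0$. There is also a stronger, pointwise statement: $h(z)\notin B$ for every $z$, so no realization $h(z^{(s)})$ can land in $B$. That is why the event counts as \emph{excluded} rather than merely null.

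In case (ii), nonemptiness means there is some $z$ with $h(z)\in B$, so $B\cap h(\Zset)\neq\varnothing$ and the boundary is reachable by the map. The identity then gives $P^Y(B)=0$, so the event carries no predictive mass. The square map $R=Z^2$ under a Gaussian latent law, where $h^{-1}(\{0\})=\{0\}$, is the illustration I would point to.

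In case (iii), the identity directly gives $P^Y(B)=P^Z(E)>0$. The positive-part map is the example: there $h^{-1}(\{0\})=(-\infty,0]$, which has probability $\Phi(-\alpha)>0$, consistent with Eq.~\eqref{eq:pospart-atom}.

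There is no real technical obstacle here. The only point needing care is the interpretation in case (ii): \emph{reachable} should be defined as $B\cap h(\Zset)\neq\varnothing$, meaning pointwise reachability by the map. It should be kept distinct from topological support, as in the remark after the parameterization-bias proposition. With that definition fixed, each consequence follows in one line from the identity.
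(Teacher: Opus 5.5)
Your proposal is correct and follows the paper's proof, which simply notes that the identity is the defining property of the pushforward measure applied to the measurable set $B$; (i)--(iii) are then read off directly. Your extra care in defining \emph{reachable} as $B\cap h(\Zset)\neq\varnothing$, and keeping it distinct from topological support, is consistent with the paper's remark after the parameterization-bias proposition.
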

\begin{proof}
This is the defining identity of the pushforward measure applied to the measurable set $B$.
\end{proof}

The following image and boundary-mass statements concern real arithmetic. Floating-point underflow and rounding can change exact numerical zero or tie diagnostics.

This elementary identity becomes a useful design taxonomy when applied to constrained probabilistic maps. For an ordering increment $\Delta$ and boundary event $B=\{\Delta=0\}$:
\begin{itemize}
\item $\Delta=\softplus(Z)$ has empty finite preimage and excludes the boundary;
\item $\Delta=Z^2$ has preimage $\{0\}$, which has probability zero under an absolutely continuous latent law;
\item $\Delta=\pospart{Z}$ has preimage $(-\infty,0]$, which has positive probability whenever the latent law assigns positive mass there. For $Z\sim\mathcal N(\mu,\sigma^2)$, this mass is $\Phi(-\mu/\sigma)$.
\end{itemize}
For the normalized positive-part boundary-mass simplex, the face event $\{Y_i=0\}$ includes latent states with $Z_i\le0$ and at least one other active coordinate, and therefore can have positive probability under a continuous latent law. By contrast, normalized squares require the measure-zero event $Z_i=0$ for a coordinate to be exactly zero, and softmax excludes zero entirely. Feasibility, boundary reachability, and boundary probability are therefore distinct modeling properties.

\subsection{Finite precision}
All feasibility theorems above are real-arithmetic statements. Matrix products, summation, null-space reconstruction, and projection can introduce small residuals in float32; this is the standard distinction between exact real-arithmetic identities and their floating-point evaluation~\citep{goldberg1991,higham2002}. A numerical threshold is therefore an evaluation convention, not a relaxation of the mathematical constraint. Continuous residuals are the primary numerical evidence; thresholded violation rates are secondary diagnostics.

\section{Extended Comparison with Projection-Based Methods}\label{app:comparison}
\subsection{Mechanism-centered comparison}
\begin{table}[H]
\centering\footnotesize
\resizebox{0.94\linewidth}{!}{%
\begin{tabular}{p{0.18\linewidth}p{0.20\linewidth}p{0.18\linewidth}p{0.16\linewidth}p{0.20\linewidth}}
\toprule
\textbf{Method} & \textbf{Primary mechanism} & \textbf{Probabilistic predictive law} & \textbf{Boundary behavior} & \textbf{Main qualification}\\
\midrule
DC3~\citep{donti2021} & Equality completion plus iterative inequality correction & Not a dedicated probabilistic framework & Completion or correction dependent & Requires correction machinery.\\
\citet{frerix2020} & Direct parameterization for homogeneous linear inequalities & Deterministic focus & Represented feasible boundary available & Specialized constraint class.\\
ProbHard\-E2E~\citep{utkarsh2025} & Differentiable probabilistic projection & Yes & Determined by projected law & Broad supported constraint classes; projection machinery required.\\
Soft-Radial~\citep{schneider2026} & Radial map into convex-set interior & Not central & Interior-oriented & Requires convex geometry and an interior anchor.\\
\SLPProbHard{} & Stochastic latent law plus structural feasibility map & Yes & Map-dependent: excluded, reachable-only, or positive mass & Requires a suitable structural map.\\
\bottomrule
\end{tabular}}
\caption{Representative mechanisms. Several approaches provide hard feasibility; \SLPProbHard{} is distinguished by making a constraint-specific structural map part of a probabilistic pushforward model.}
\end{table}

\subsection{Detailed comparison with ProbHardE2E}
ProbHardE2E and \SLPProbHard{} target complementary regimes. Projection is preferable when a feasible set is implicit, complex, or easier to project onto than to parameterize. \SLPProbHard{} is attractive when an explicit generative representation is available and scientifically acceptable. The compatible affine experiment below gives a direct empirical comparison using the frozen official source and a verified generic $A,b$ adapter; it is intentionally narrower than ProbHardE2E's full supported constraint scope.

\begin{longtable}{p{0.17\linewidth}p{0.20\linewidth}p{0.20\linewidth}p{0.17\linewidth}}
\caption{Mechanistic comparison with probabilistic projection.}\label{tab:comparison}\\
\toprule
\textbf{Criterion} & \textbf{Projection / ProbHardE2E family} & \textbf{\SLPProbHard{}} & \textbf{Implication}\\
\midrule\endfirsthead
\toprule
\textbf{Criterion} & \textbf{Projection / ProbHardE2E family} & \textbf{\SLPProbHard{}} & \textbf{Implication}\\
\midrule\endhead
Core mechanism & Unconstrained probabilistic representation followed by projection/correction & $Y=h(Z)$ with $h(\Zset)\subseteq\Cset$ & Different constructions of the feasible law.\\
Organizing taxonomy & Linear/nonlinear equalities and convex inequalities under projection & Affine, nonnegative/ordered, normalized/compositional, composed, and functional structural families & Different axes of generality.\\
Constraint generality & Broad for supported equality and convex-inequality classes & Selective and map-dependent & Projection is generally broader.\\
Linear equality coordinates & Ambient prediction before projection & $q=d-m$ free coordinates & Exact intrinsic reduction in this family.\\
Nonlinear coordinates & Ambient or method-specific & May be intrinsic or redundant & Dimensional benefit is not universal.\\
Projection metric & Selected geometry may matter & No projection metric for explicit maps & Removes metric choice.\\
Solver / correction stage & Required by the chosen projection/correction mechanism & None when map is explicit & Strongest for maps that replace nontrivial solves.\\
Boundary probability & Induced by projection geometry and base law & Induced by structural-map preimages and latent law & Both can create boundary mass; mechanisms differ.\\
Affine moments & Can be exact & Exact & Tie in tractability for affine maps.\\
Main risk & Projection cost/conditioning or solver complexity & Parameterization bias and map-induced optimization effects & Complementary failure modes.\\
\bottomrule
\end{longtable}

\section{Experimental Protocol and Reproducibility}\label{app:protocol}
\subsection{Frozen benchmark families and held-out seeds}
The reported evidence uses development-independent seed ranges after map and soft-penalty choices were frozen:
\begin{itemize}
\item linear primary: seeds 20-29; linear scaling: 30-34; covariance ablation: 40-44;
\item ordering primary: seeds 60-69; ordering scaling: 70-74;
\item real-world weak-order Benchmark A and composed nonnegative-ordering Benchmark B: seeds 80-89, intentionally matched across A/B;
\item simplex primary: seeds 100-109; simplex scaling: 110-114;
\item synthetic composed nonnegative-ordering Benchmark B: seeds 115-119;
\item nonlinear circle primary: seeds 130-139;
\item real-world affine FDC coherence: seeds 150-159;
\item real-world nonlinear scale-shape coherence: seeds 170-179.
\end{itemize}
All reported training experiments use float32 on CPU. Primary probabilistic evaluations use 100 predictive samples. Smoke/development seeds are disjoint from the reported primary seeds.

\begin{table}[H]
\centering\small
\begin{tabular}{p{0.36\linewidth}p{0.54\linewidth}}
\toprule
Frozen item & Value used in the primary synthetic benchmarks\\
\midrule
Train / validation / test size & 3000 / 750 / 750\\
Backbone & MLP, hidden width 128, depth 3, dropout 0\\
Batch size & 128\\
Maximum epochs / early stopping & 60 / patience 20, minimum improvement $10^{-6}$\\
Learning rate / weight decay & $10^{-3}$ / $10^{-5}$\\
Gradient clipping & 5.0 where exposed by the frozen runner\\
Monte Carlo samples & 12 training, 32 validation, 100 evaluation\\
Probabilistic objective & 0.5 marginal CRPS + 0.5 Energy Score\\
Precision / device & float32 / CPU\\
Primary network dimensions & input 12; outputs 11 (linear), 5 (ordering/simplex), 2 (circle)\\
\bottomrule
\end{tabular}
\caption{Primary synthetic training configuration. Competing methods share the same backbone/training protocol within each benchmark; only the constraint mechanism and corresponding output/latent head differ where required.}
\label{tab:repro-hparams}
\end{table}

\subsection{Synthetic-DGP transparency and fairness}
The synthetic tasks are diagnostic benchmarks rather than claims of universal DGP dominance. Table~\ref{tab:dgp-fairness} makes structural alignment explicit. Alignment can favor an \SLPProbHard{} representation, so the scientific question for each task is stated separately from raw score comparison.

\begin{table}[H]
\centering\scriptsize
\resizebox{\linewidth}{!}{%
\begin{tabular}{p{0.12\linewidth}p{0.31\linewidth}p{0.18\linewidth}p{0.16\linewidth}p{0.24\linewidth}}
\toprule
Benchmark & Frozen target-generation structure & Same structure as primary \SLPProbHard{} map? & Potential \SLPProbHard{} alignment & Intended scientific question\\
\midrule
Affine & Feasible latent regression is mapped through a fixed affine null-space representation $y=y_0+Nz$. & Yes & Yes: coordinates match the feasible DGP & Can a probabilistic model learn only the $q=d-m$ free directions while matching ambient projection?\\
Weak ordering & A base response is combined with cumulative nonnegative positive-part increments, producing exact-tie mass. & Yes & Yes: boundary mechanism is aligned & Does a boundary-mass ordering representation change tie probability and proper scores relative to isotonic projection?\\
Simplex & Nonnegative components with an exact-zero mechanism are normalized to the simplex, producing face-supported observations. & Yes, at the structural level & Yes: boundary-capable normalization is aligned & How do direct boundary-capable normalization and Euclidean simplex projection differ in face probability and predictive scoring?\\
Circle & A conditional angle is drawn from a von Mises law and mapped as $(\cos\Theta,\sin\Theta)$. Learned \SLPProbHard{} uses a Gaussian angle; projection uses an ambient bivariate Gaussian. & Geometry yes; latent family no & Partial & Does one structural angle provide competitive hard-feasible prediction relative to ambient radial projection under a non-Gaussian angular DGP?\\
Composed nonneg. order & Targets are generated directly by cumulative positive-part from the first coordinate onward. & Yes, exactly & Yes, deliberately strong & Mechanism validation: do the predicted zero/tie boundary effects follow the preimage analysis when the structural DGP is known?\\
\bottomrule
\end{tabular}}
\caption{Synthetic-DGP transparency. The table separates structural alignment from the question each benchmark is designed to test. The composed benchmark is deliberately favorable and is interpreted only as mechanism validation.}
\label{tab:dgp-fairness}
\end{table}

\subsection{Baselines}
Linear equality compares unconstrained prediction, validation-selected soft penalty, post-hoc orthogonal projection, end-to-end orthogonal projection (Projection-Or), covariance-weighted/oblique projection, and \SLPProbHard{}. Ordering compares unconstrained output, soft crossing penalty, post-hoc sorting, post-hoc isotonic correction, end-to-end isotonic projection, and positive-part SLP. Simplex compares unconstrained output, soft constraint loss, post-hoc simplex projection, end-to-end Euclidean simplex projection, and normalized positive-part SLP. Circle compares unconstrained bivariate Gaussian output, soft radius penalty, post-hoc radial projection, end-to-end radial projection, and one-coordinate circular SLP. The real-world affine extension uses unconstrained/soft augmented outputs, post-hoc affine projection, end-to-end affine projection, and structural affine SLP. The real-world nonlinear extension uses unconstrained/soft $(B,R,P,\Delta)$ outputs, post-hoc alternating feasible correction, end-to-end alternating feasible correction, and structural scale-shape SLP.

\subsection{Shared hydrological FDC source and preprocessing}
The hydrological experiments reuse one frozen dataset created before the affine and nonlinear extensions. Daily river discharge comes from the \emph{GloFAS v5.0 hydrological reanalysis} of the European Commission Joint Research Centre~\citep{grimaldi2026glofas}; GloFAS uses the LISFLOOD hydrological modelling system~\citep{vanderknijff2010,burek2013}. Seven Emilia-Romagna basins are used: Crostolo, Enza, Panaro, Parma, Secchia, Taro, and Trebbia. Basin-aggregated precipitation, minimum-temperature, and maximum-temperature descriptors are derived from ERACLITO61~\citep{antolini2016}, together with season indicators, basin area, and basin identity. The common 1996-2022 period yields 756 seasonal basin-season-year samples. The temporal split is 532 training samples (1996-2014), 112 validation samples (2015-2018), and 112 held-out test samples (2019-2022), with 22 standardized predictor features.

Using the conventional exceedance-probability interpretation of flow-duration curves~\citep{burgan2018,vogel1994,vogel1995} and the multi-output FDC precedent of \citet{worland2019}, the seven FDC components are
\begin{equation}
Q=[Q_{95},Q_{90},Q_{75},Q_{50},Q_{25},Q_{10},Q_{05}],
\end{equation}
where $Q_{95}$ is the ordinary 5th percentile and $Q_{05}$ the ordinary 95th percentile of the seasonal daily discharge distribution. Thus physically admissible FDCs satisfy $0\le Q_{95}\le\cdots\le Q_{05}$. The original ordering experiment uses $\log(1+Q)$ targets and transforms samples back by $\exp(y)-1$ without clipping.

The affine and nonlinear extensions do \emph{not} rerun the daily-discharge, predictor-NetCDF, basin-mask, FDC-quantile, or split construction. They read the frozen raw $Q$ arrays and the already-standardized $x$ arrays. A single positive multiplicative scale,
\begin{equation}
s_Q=\sqrt{\operatorname{mean}(Q_{\mathrm{train}}^2)}=43.7190796,
\end{equation}
is fitted on training raw FDC values only. No centering is used. The scaled target $Q^\star=Q/s_Q$ preserves both adjacent-difference and scale-shape equalities exactly. The frozen source SHA-256 is \texttt{18cd85cac0405b740866c088a61566c93e4ec6d020bbf881a9f33ee154afff47}.

\subsection{Real-world nonnegative ordering}
The composed map
\begin{equation}
Q_k=\sum_{j=1}^{k}\pospart{z_j}
\end{equation}
guarantees $0\le Q_1\le\cdots\le Q_d$. The matched hard reference is the frozen nonnegative isotonic projection. This representation directly tests physical order/non-negativity and boundary mass in the FDC itself.

\subsection{Real-world affine FDC coherence}\label{app:fdc-affine}
For scaled $Q^\star\in\R^7$, define six adjacent increments
\begin{equation}
\Delta_i=Q^\star_{i+1}-Q^\star_i,\qquad i=0,\ldots,5,
\end{equation}
and augment the target as
\begin{equation}
Y_{\mathrm{aff}}=(Q^\star_0,\ldots,Q^\star_6,\Delta_0,\ldots,\Delta_5)\in\R^{13}.
\end{equation}
The six affine constraints are
\begin{equation}
r_i(Y)=Q^\star_i-Q^\star_{i+1}+\Delta_i=0.
\end{equation}
If $A\in\R^{6\times13}$ collects these rows, $\operatorname{rank}(A)=6$, so the affine feasible set has intrinsic dimension $13-6=7$. The structural map takes seven stochastic coordinates,
\begin{equation}
h_{\mathrm{aff}}(z)=\left(z,\operatorname{diff}(z)\right),
\end{equation}
and therefore satisfies every equality algebraically. The end-to-end affine reference predicts a 13-dimensional diagonal Gaussian and applies the exact Euclidean projection onto $AY=0$. Hence the hard comparison is 13 ambient stochastic coordinates versus 7 structural coordinates, a 46.1538\% reduction.

This benchmark intentionally isolates \emph{affine coherence only}. Neither the affine projection nor $h_{\mathrm{aff}}$ additionally enforces non-negativity or FDC ordering; negative-value and crossing rates are therefore reported only as secondary diagnostics and must not be described as violations of the affine constraint. The validation-selected soft weight is 0.01; primary seeds are 150-159. Training uses the same 128-width/depth-3 MLP, batch 128, 60-epoch cap, $10^{-3}$ learning rate, $10^{-5}$ weight decay, 12/32 Monte Carlo samples for train/validation, and 100 test samples.

Primary predictive reporting is in reconstructed raw-$Q$ space, $Q=s_Q Q^\star$. Augmented $(Q^\star,\Delta^\star)$ scores are secondary. Constraint metrics use $r_i$ and report mean absolute residual, mean squared residual, maximum absolute residual, and vector violation rate $\mathbf{1}[\max_i|r_i|>10^{-5}]$.

\subsection{Real-world nonlinear scale-shape coherence}\label{app:fdc-nonlinear}
For the same scaled FDC, define
\begin{equation}
B=Q^\star_0,\qquad
\Delta_i=Q^\star_{i+1}-Q^\star_i,\qquad
R=\sum_{i=0}^{5}\Delta_i=Q^\star_6-Q^\star_0.
\end{equation}
For $R>0$, define the compositional shape
\begin{equation}
P_i=\frac{\Delta_i}{R},\qquad P_i\ge0,\qquad \sum_iP_i=1.
\end{equation}
The augmented target is
\begin{equation}
Y_{\mathrm{nl}}=(B,R,P_0,\ldots,P_5,\Delta_0,\ldots,\Delta_5)\in\R^{14},
\end{equation}
with nonlinear bilinear equalities
\begin{equation}
\Delta_i=RP_i,\qquad i=0,\ldots,5.
\end{equation}
No train/validation/test FDC is flat at the frozen scaled threshold $R\le10^{-8}$ (flat counts 0/0/0). The protocol nevertheless freezes the convention $P=e_1$ for an exactly flat FDC, under which $\Delta=RP=0$ remains feasible.

The structural map uses
\begin{align}
B&=\pospart{z_B}, &
R&=\pospart{z_R},\\
P&=\operatorname{NormPosPart}(z_P), &
\Delta&=RP.
\end{align}
It therefore enforces $B\ge0$, $R\ge0$, $P\ge0$, $\sum_iP_i=1$, $\Delta=RP$, and consequently nonnegative increments and a nonnegative weakly ordered reconstructed FDC. For $d=7$, the mathematical intrinsic DOF is 7. The boundary-capable normalized positive-part shape uses six computational coordinates for the five-dimensional simplex, so \SLPProbHard{} uses $1+1+6=8$ computational stochastic coordinates. The ambient correction model predicts all 14 coordinates; the reduction is 42.8571\%. This is reported as a \emph{computational} stochastic-coordinate reduction, not as an 8-dimensional intrinsic manifold claim.

The end-to-end hard comparator is a five-iteration differentiable alternating feasible correction. Given ambient predictions $(B_0,R_0,P_0,\Delta_0)$, it projects $B_0$ to $\R_+$ and alternates
\begin{align}
R &\leftarrow \pospart{\frac{R_0+P^\top\Delta_0}{1+\|P\|_2^2}},\\
P &\leftarrow \Pi_{\Delta}\!\left(\frac{P_0+R\Delta_0}{1+R^2}\right),
\end{align}
then sets $\Delta=RP$. These are exact conditional minimizers for the two blocks of a feasible-correction objective, but the finite alternating procedure is \emph{not claimed to be the exact joint Euclidean projection}. The validation-selected soft weight is 0.01; primary seeds are 170-179.

Primary nonlinear feasibility metrics are
\begin{align}
\mathrm{CE}_{\mathrm{bilinear}}&=\operatorname{mean}_{i,s}|\Delta_i-RP_i|,\\
\mathrm{CE}_{\mathrm{simplex}}&=\operatorname{mean}_{s}\left|\sum_iP_i-1\right|,
\end{align}
together with maxima, range-conservation diagnostics, a joint $10^{-5}$ nonlinear violation rate, and reconstructed raw-$Q$ negative/order violation rates. Raw-$Q$ probabilistic metrics are primary.

\subsection{Predictive and constraint metrics}
MSE and MAE are computed from the predictive mean. Marginal CRPS is averaged across observations and coordinates, and multivariate Energy Score across observations. Central 90\% marginal interval coverage and width are reported. Circle additionally reports circular/geodesic MAE and the equality residual $y_1^2+y_2^2-1$. Affine FDC uses the six residuals $Q_i-Q_{i+1}+\Delta_i$; nonlinear FDC uses bilinear, simplex, range-conservation, nonnegativity, and reconstructed-order diagnostics. Boundary frequencies remain distributional diagnostics rather than feasibility metrics.

\subsection{Covariance ablation}
The covariance ablation uses the frozen linear benchmark and seeds 40-44. It compares the standard Projection-Or reference with \SLPProbHard{} using diagonal, rank-4 low-rank-plus-diagonal, and full covariance latent heads. The purpose is to test whether the diagonal latent law limits \SLPProbHard{} probabilistic expressiveness; it is not a matched full-covariance projection study.

\subsection{Controlled computational profiling}
Controlled profiling performs no training. Under a fixed CPU environment (8 threads, float32), it measures (i) the feasibility operator alone and (ii) a matched MLP + diagonal Gaussian sampling + feasibility operator pipeline. Each configuration uses 100 predictive samples, five warm-up calls, and 20 measured repetitions. Application-sized batches use $B=112$, with selected $B=1024$ throughput measurements. The five-family rerun includes affine equality, ordering, simplex, the nonlinear circle, and the real-world nonnegative-ordering geometry. The real-world affine/nonlinear numbers reported elsewhere are direct frozen-run implementation timings rather than additions to this controlled profiler.

\section{Additional Experimental Results}\label{app:additional-results}
For FDC comparisons, seed-based intervals and tests describe run variability conditional on the fixed dataset and temporal split. They do not quantify sampling uncertainty across new years, basins, or hydrological datasets.
\subsection{Float64 affine diagnostic}
\begin{table}[H]
\centering\small
\resizebox{0.98\linewidth}{!}{%
\begin{tabular}{lcccccc}
\toprule
Method & CEabs FP32 & CEabs FP64 & CEmax FP32 & CEmax FP64 & VR FP32 & VR FP64\\
\midrule
Projection-Or & $4.22\times 10^{-7}$ & $6.89\times 10^{-16}$ & $4.29\times 10^{-6}$ & $4.44\times 10^{-15}$ & 0.0000 & 0.0000 \\
Projection-Ob & $5.68\times 10^{-7}$ & $1.28\times 10^{-15}$ & $2.29\times 10^{-5}$ & $1.89\times 10^{-13}$ & 0.0017 & 0.0000 \\
\SLPProbHard{} & $2.69\times 10^{-7}$ & $4.97\times 10^{-16}$ & $2.86\times 10^{-6}$ & $5.33\times 10^{-15}$ & 0.0000 & 0.0000 \\
\bottomrule
\end{tabular}}
\caption{Seed-20 diagnostic. Trained models are unchanged; only feasibility-map / residual computation is repeated in float64.}
\label{tab:fp64}
\end{table}
The mean residuals collapse from approximately $10^{-7}$ in float32 to $10^{-15}$ in float64 for all three hard-constrained affine methods. Projection-Ob's seed-20 float32 violation rate of 0.0017 also disappears in float64. This supports the interpretation that the small base-case residuals are numerical realization effects rather than a relaxation of the mathematical constraints. The diagnostic is not sufficient by itself to attribute every high-dimensional scaling exceedance to the same source, so the manuscript keeps continuous residuals primary.

\subsection{Linear scaling}
Across the twelve pre-specified $(d,m,q)$ configurations, \SLPProbHard{} always uses $q=d-m$ stochastic coordinates, giving reductions from 12.3\% to 38.5\%. Point accuracy remains close. CRPS and Energy Score usually favor Projection-Or slightly, but the gap generally narrows at higher codimension. At larger configurations, float32 $10^{-5}$ threshold exceedances appear for both affine methods; \SLPProbHard{} is consistently lower in the largest cases. For $(d,m,q)=(97,33,64)$, Projection-Or has $\mathrm{VR}_{10^{-5}}=0.3958$ versus 0.0193 for SLP, while mean absolute residuals remain at $1.99\times10^{-6}$ and $1.23\times10^{-6}$, respectively. The complete table is in Appendix~\ref{app:scaling}.

\subsection{Ordering scaling}
From $d=3$ to $d=40$, positive-part \SLPProbHard{} has lower CRPS and Energy Score than end-to-end isotonic projection in every evaluated dimension. The CRPS gain grows from 0.55\% at $d=3$ to 25.96\% at $d=40$. More strikingly, isotonic projection's tolerance-based tie rate rises from 0.300 at $d=3$ to 0.891 at $d=40$, while the observed rate stays near 0.25. Positive-part \SLPProbHard{} remains close to that target from $d=5$ onward; averaged across the five scaling dimensions, its absolute tie-rate error is 0.030 versus 0.283 for end-to-end isotonic projection. Appendix~\ref{app:scaling} gives the complete scaling table.

\subsection{Synthetic composed nonnegative ordering: mechanism validation}
A controlled $d=5$ benchmark generated targets by cumulative positive-part, so both non-negativity and ordering are structurally active. Because the target generator shares the cumulative-positive-part mechanism with the \SLPProbHard{} map, this is a deliberately favorable controlled test of the boundary-preimage prediction rather than an independent test of generalization across unknown data-generating mechanisms. Across frozen seeds 115-119, both nonnegative isotonic projection and nonnegative cumulative positive-part \SLPProbHard{} achieve zero feasibility violations. Predictive scores are close, with \SLPProbHard{} slightly lower CRPS and Energy Score.

\begin{table}[H]
\centering\small
\resizebox{\linewidth}{!}{%
\begin{tabular}{lrrrrrr}
\toprule
Method & MSE & MAE & CRPS & ES & $P(y_1=0)$ & Pair-tie rate\\
\midrule
Projection-nonneg-isotonic & $0.48397\pm0.00507$ & $0.49376\pm0.00177$ & $0.35222\pm0.00122$ & $0.93289\pm0.00328$ & $0.5803$ & $0.5440$\\
SLP-nonnegative-positive-part & $0.48433\pm0.00751$ & $0.49437\pm0.00413$ & $0.35027\pm0.00315$ & $0.92047\pm0.00740$ & $0.5802$ & $0.4374$\\
Target & -- & -- & -- & -- & $0.5850$ & $0.4798$\\
\bottomrule
\end{tabular}}
\caption{Synthetic composed constraint benchmark, seeds 115-119. Both hard methods have zero negative values and zero ordering violations. \SLPProbHard{} is closer on adjacent-tie frequency, whereas projection is closer on the anchor-zero frequency.}
\label{tab:synthetic-b}
\end{table}

The paired Energy Score difference favors \SLPProbHard{} in all five seeds (paired $t$ $p=0.019$; Wilcoxon $p=0.0625$), while the paired test did not detect a CRPS difference at this five-seed level ($p=0.163$). With only five seeds, we report effect sizes, confidence intervals, and paired tests rather than an observed post-hoc power calculation. The absolute pair-tie calibration error is 0.0423 for \SLPProbHard{} versus 0.0643 for projection and favors \SLPProbHard{} in all five seeds ($p=0.00368$). Conversely, projection has the smaller anchor-zero error. This controlled benchmark therefore supports the boundary-preimage mechanism without implying universal boundary-calibration superiority.

\subsection{Nonlinear circle primary results}
\begin{table}[H]
\centering\scriptsize
\resizebox{\linewidth}{!}{%
\begin{tabular}{lrrrrrrrr}
\toprule
Method & MSE & MAE & CRPS & ES & Cov.90 & Width.90 & CEabs & VR$_{10^{-5}}$\\
\midrule
Unconstrained & 0.3622 & 0.4857 & 0.3422 & 0.5391 & 0.835 & 1.593 & 0.623 & 1.0000\\
Soft penalty & 0.3613 & 0.4850 & 0.3438 & 0.5417 & 0.810 & 1.484 & 0.584 & 1.0000\\
Post-hoc radial & 0.3649 & 0.4827 & 0.3344 & 0.5253 & 0.807 & 1.496 & $4.80\times10^{-8}$ & 0\\
End-to-end radial & 0.3557 & 0.4893 & 0.3277 & 0.5147 & 0.864 & 1.704 & $4.81\times10^{-8}$ & 0\\
SLP-angle & 0.3557 & 0.4836 & 0.3272 & 0.5143 & 0.854 & 1.611 & $1.70\times10^{-8}$ & 0\\
\bottomrule
\end{tabular}}
\caption{Unit-circle primary results, seeds 130-139. Residual differences between the two hard maps are numerical roundoff; both have zero $10^{-5}$ violations. \SLPProbHard{} uses one stochastic coordinate versus two for radial projection.}
\label{tab:circle-full}
\end{table}

For end-to-end radial projection versus SLP, paired tests detected no differences in MSE ($p=0.988$), CRPS ($p=0.750$), Energy Score ($p=0.888$), or circular MAE ($p=0.633$) at the ten-seed level. \SLPProbHard{} MAE is 1.17\% lower ($p=0.0279$). The correct primary conclusion is therefore practical hard feasibility with a 50\% stochastic-coordinate reduction and no detected differences in those principal scores, not universal score superiority.

\subsection{Real-world affine FDC results}\label{app:fdc-affine-results}
\begin{table}[H]
\centering\scriptsize
\resizebox{\linewidth}{!}{%
\begin{tabular}{lrrrrrrrr}
\toprule
Method & Raw MSE & Raw MAE & Raw CRPS & Raw ES & Cov.90 & Width.90 & CEabs & VR$_{10^{-5}}$\\
\midrule
Unconstrained & 104.47 & 4.738 & 3.689 & 13.882 & 0.766 & 13.86 & 0.1270 & 1.000\\
Soft affine & 104.37 & 4.731 & 3.686 & 13.882 & 0.763 & 13.62 & 0.1250 & 1.000\\
Post-hoc affine projection & 103.72 & 4.685 & 3.717 & 14.099 & 0.681 & 9.90 & $3.51\times10^{-8}$ & 0\\
End-to-end affine projection & 106.40 & 4.707 & 3.627 & 13.649 & 0.746 & 13.57 & $5.85\times10^{-8}$ & 0\\
SLP-affine & 105.14 & 4.649 & 3.698 & 13.997 & 0.712 & 10.94 & 0 & 0\\
\bottomrule
\end{tabular}}
\caption{Real-world affine FDC coherence, seeds 150-159. Raw-$Q$ metrics are primary; the affine branch enforces only $Q_i-Q_{i+1}+\Delta_i=0$, not nonnegativity/order.}
\label{tab:fdc-affine-full}
\end{table}

\begin{table}[H]
\centering\scriptsize
\resizebox{\linewidth}{!}{%
\begin{tabular}{lrrrrrr}
\toprule
Metric & Projection & \SLPProbHard{} & Mean diff. & 95\% CI & paired $t$ $p$ & Wilcoxon $p$\\
\midrule
Raw MSE & 106.398 & 105.136 & -1.262 & [-10.616, 8.092] & 0.767 & 0.770\\
Raw MAE & 4.707 & 4.649 & -0.058 & [-0.194, 0.078] & 0.361 & 0.432\\
Raw CRPS & 3.627 & 3.698 & +0.072 & [-0.032, 0.176] & 0.153 & 0.193\\
Raw ES & 13.649 & 13.997 & +0.348 & [-0.051, 0.747] & 0.080 & 0.105\\
Coverage.90 & 0.746 & 0.712 & -0.034 & [-0.061, -0.0068] & 0.0197 & 0.0195\\
Width.90 & 13.567 & 10.937 & -2.630 & [-3.075, -2.186] & $3.00\times10^{-7}$ & 0.00195\\
\bottomrule
\end{tabular}}
\caption{Paired real-world affine comparison: \SLPProbHard{} minus end-to-end affine projection. Paired tests did not detect differences in the principal accuracy/proper-score metrics at ten seeds; interval width and coverage show a significant sharpness-calibration trade-off.}
\label{tab:fdc-affine-paired}
\end{table}

The affine hard methods both have zero $10^{-5}$ violations. \SLPProbHard{} uses 7 rather than 13 stochastic coordinates (46.15\% reduction) and 37,774 versus 39,322 parameters. In the frozen implementation, median inference latency is 1.148 ms for \SLPProbHard{} and 1.623 ms for end-to-end projection; this approximately 29\% reduction is implementation-specific. The soft penalty selected at validation weight 0.01 retains $\mathrm{VR}_{10^{-5}}=1$, illustrating that the penalty baseline does not hard-enforce affine coherence.

\subsection{Real-world nonlinear scale-shape results}\label{app:fdc-nonlinear-results}
\begin{table}[H]
\centering\scriptsize
\resizebox{\linewidth}{!}{%
\begin{tabular}{lrrrrrrrr}
\toprule
Method & Raw MSE & Raw MAE & Raw CRPS & Raw ES & Cov.90 & Width.90 & Bilinear CE & Hard VR\\
\midrule
Unconstrained & 103.39 & 4.892 & 3.850 & 14.292 & 0.804 & 14.19 & 0.09695 & 1.000\\
Soft nonlinear & 103.93 & 4.971 & 3.921 & 14.488 & 0.805 & 14.62 & 0.09597 & 1.000\\
Post-hoc feasible correction & 102.33 & 4.840 & 3.568 & 13.754 & 0.823 & 14.97 & 0 & 0\\
End-to-end feasible correction & 104.92 & 5.151 & 4.177 & 14.483 & 0.630 & 15.75 & 0 & 0\\
SLP-scale-shape & 106.17 & 5.175 & 4.047 & 14.313 & 0.649 & 15.67 & 0 & 0\\
\bottomrule
\end{tabular}}
\caption{Real-world nonlinear scale-shape FDC coherence, seeds 170-179. All three hard methods also have zero negative reconstructed discharge and zero ordering violations.}
\label{tab:fdc-nonlinear-full}
\end{table}

\begin{table}[H]
\centering\scriptsize
\resizebox{\linewidth}{!}{%
\begin{tabular}{lrrrrrr}
\toprule
Metric & Correction & \SLPProbHard{} & Mean diff. & 95\% CI & paired $t$ $p$ & Wilcoxon $p$\\
\midrule
Raw MSE & 104.916 & 106.166 & +1.250 & [-3.778, 6.279] & 0.588 & 0.432\\
Raw MAE & 5.151 & 5.175 & +0.024 & [-0.175, 0.223] & 0.790 & 0.432\\
Raw CRPS & 4.177 & 4.047 & -0.130 & [-0.285, 0.025] & 0.090 & 0.232\\
Raw ES & 14.483 & 14.313 & -0.171 & [-0.609, 0.268] & 0.402 & 0.557\\
Coverage.90 & 0.630 & 0.649 & +0.020 & [-0.066, 0.105] & 0.619 & 0.375\\
Width.90 & 15.747 & 15.669 & -0.078 & [-1.909, 1.753] & 0.925 & 1.000\\
\bottomrule
\end{tabular}}
\caption{Paired real-world nonlinear comparison: \SLPProbHard{} minus the five-iteration end-to-end alternating feasible correction. Paired tests did not detect differences in the principal predictive metrics at ten seeds.}
\label{tab:fdc-nonlinear-paired}
\end{table}

The post-hoc comparator is also feasible and has lower reported mean raw-$Q$ CRPS than both end-to-end alternatives. Table~\ref{tab:fdc-nonlinear-paired} compares SLP with the end-to-end correction model only; it does not provide a paired significance test against the post-hoc model. These results do not identify the cause of the performance difference.

Both primary end-to-end hard methods have zero bilinear violations at $10^{-5}$, simplex residuals near float32 roundoff, zero negative reconstructed discharge, and zero FDC ordering violations. \SLPProbHard{} uses 8 computational stochastic coordinates versus 14 ambient coordinates for the correction model (42.86\% reduction), with one redundant computational coordinate relative to the 7 intrinsic DOF. Parameter counts are 38,032 versus 39,580. Frozen median inference latency is 1.478 ms for \SLPProbHard{} versus 7.369 ms for the five-iteration end-to-end correction; the roughly 80\% reduction is specific to these implementations and iteration count. The validation-selected soft nonlinear baseline has bilinear CE about 0.096 and hard violation rate 1.0.

\subsection{Complete paired statistics}
\begin{table}[H]
\centering\scriptsize
\resizebox{\linewidth}{!}{%
\begin{tabular}{llrrrrrr}
\toprule
Family & Metric & Reference mean & \SLPProbHard{} mean & Rel. \SLPProbHard{} diff. & 95\% CI of paired diff. & paired $t$ $p$ & Wilcoxon $p$\\
\midrule
Linear equality & mse & 0.29646 & 0.29574 & -0.25\% & [-0.003834, 0.00238] & 0.609 & 0.77 \\
Linear equality & mae & 0.39286 & 0.39295 & +0.02\% & [-0.001766, 0.001944] & 0.916 & 0.77 \\
Linear equality & crps & 0.28467 & 0.29012 & +1.91\% & [0.003935, 0.00696] & 1.91e-05 & 0.00195 \\
Linear equality & energy score & 1.20878 & 1.22158 & +1.06\% & [0.005968, 0.01964] & 0.00218 & 0.00391 \\
Weak ordering & mse & 5.56771 & 5.09482 & -8.49\% & [-1.049, 0.1034] & 0.0964 & 0.084 \\
Weak ordering & mae & 1.09838 & 1.08457 & -1.26\% & [-0.02347, -0.004143] & 0.0103 & 0.0195 \\
Weak ordering & crps & 0.79450 & 0.78204 & -1.57\% & [-0.01804, -0.006897] & 0.000678 & 0.00195 \\
Weak ordering & energy score & 2.24397 & 2.18828 & -2.48\% & [-0.07136, -0.04002] & 2.13e-05 & 0.00195 \\
Simplex & mse & 0.03222 & 0.03196 & -0.84\% & [-0.0004537, -7.085e-05] & 0.0127 & 0.00977 \\
Simplex & mae & 0.11799 & 0.11833 & +0.28\% & [-0.0002802, 0.000968] & 0.244 & 0.232 \\
Simplex & crps & 0.08899 & 0.08662 & -2.67\% & [-0.002826, -0.00191] & 9.62e-07 & 0.00195 \\
Simplex & energy score & 0.24740 & 0.24306 & -1.76\% & [-0.005441, -0.003236] & 9.33e-06 & 0.00195 \\
\bottomrule
\end{tabular}}
\caption{Paired untouched-seed comparisons. Relative difference is $100(\mathrm{SLP}-\mathrm{reference})/\mathrm{reference}$; negative values favor \SLPProbHard{} for error/score metrics. $p$-values are unadjusted and are reported as diagnostics rather than as a multiple-comparison discovery exercise.}
\label{tab:paired}
\end{table}

\subsection{Latent covariance expressiveness}
Table~\ref{tab:cov-ablation} tests whether the diagonal latent Gaussian is responsible for the linear \SLPProbHard{} probabilistic-score gap. All variants remain feasible at the $10^{-5}$ threshold. Moving from diagonal \SLPProbHard{} to low-rank or full covariance leaves MSE/MAE essentially unchanged but improves CRPS from 0.28887 to about 0.2837 and Energy Score from 1.21714 to about 1.20710. The low-rank and full variants are numerically very close in predictive score on this small $q=8$ benchmark. Because the rank-4 implementation has slightly more parameters than the full Cholesky head at this dimension, no parameter-efficiency claim is made for low-rank covariance here.

\begin{table}[H]
\centering\small
\begin{tabular}{lrrrrrr}
\toprule
Method & MSE & CRPS & ES & Cov.90 & Parameters & $\mathrm{VR}_{10^{-5}}$\\
\midrule
Projection-Or & 0.29741 & 0.28544 & 1.21124 & 0.7825 & 37,526 & 0\\
\SLPProbHard{} diagonal & 0.29543 & 0.28887 & 1.21714 & 0.7954 & 36,752 & 0\\
\SLPProbHard{} low-rank & 0.29591 & 0.28364 & 1.20710 & 0.8016 & 40,880 & 0\\
\SLPProbHard{} full & 0.29587 & 0.28376 & 1.20710 & 0.7969 & 40,364 & 0\\
\bottomrule
\end{tabular}
\caption{Covariance expressiveness ablation, seeds 40-44. The experiment compares covariance families within \SLPProbHard{} against the standard Projection-Or reference; it is not a matched full-covariance projection study.}
\label{tab:cov-ablation}
\end{table}

\section{Controlled Computational Profiling}\label{app:profiling}
\subsection{Controlled computational profiling}
Table~\ref{tab:profiling} reports the circle-inclusive controlled rerun for the application-sized $B=112$, $S=100$ end-to-end workload. Affine equality shows a 1.30$\times$ \SLPProbHard{} advantage. Simplex savings rise from 1.06$\times$ at $d=5$ to 2.12$\times$ at $d=40$. On the nonlinear circle, \SLPProbHard{} is 1.22$\times$ faster end to end while sampling one structural coordinate rather than two ambient coordinates. Ordering remains dramatically faster with cumulative positive-part than the project's reference PAVA/isotonic implementation; the real-world nonnegative-ordering pair shows the same implementation pattern. These ratios are evidence about this controlled implementation and workload, not universal solver speedups.

\begin{table}[H]
\centering\small
\setlength{\tabcolsep}{3pt}
\begin{tabular}{lrrrrr}
\toprule
Family & $d$ & \makecell{Reference\\ms} & \makecell{\SLPProbHard{}\\ms} & \makecell{Ref./\\\SLPProbHard{}} & \makecell{Latency\\reduction}\\
\midrule
Linear equality & 11 & 1.784 & 1.367 & 1.30$\times$ & 23.4\%\\
Ordering & 5 & 248.802 & 0.958 & 259.7$\times$ & 99.61\%\\
Ordering & 40 & 504.658 & 3.872 & 130.3$\times$ & 99.23\%\\
Simplex & 5 & 1.471 & 1.382 & 1.06$\times$ & 6.0\%\\
Simplex & 40 & 7.492 & 3.540 & 2.12$\times$ & 52.8\%\\
Circle & 2 & 0.616 & 0.503 & 1.22$\times$ & 18.3\%\\
Real-world nonneg. order & 7 & 272.462 & 0.949 & 287.1$\times$ & 99.65\%\\
\bottomrule
\end{tabular}
\caption{Controlled end-to-end CPU profiling: matched MLP + diagonal Gaussian sampling + feasibility operation, batch 112 and 100 predictive samples, 8 threads, float32, 5 warm-ups and 20 measured repeats. Ordering/PAVA ratios are implementation-level.}
\label{tab:profiling}
\end{table}

For the circle, isolated operator latency is 0.149 ms for radial normalization and 0.100 ms for $(\cos z,\sin z)$ at $B=112$, a 33.0\% reduction. At $B=1024$, end-to-end circle latency is 3.111 ms versus 2.359 ms (1.32$\times$). Both circle operators satisfy the equality to numerical precision. For affine equality, the $B=1024$ end-to-end pipeline is 13.564 ms versus 10.153 ms (1.34$\times$), illustrating that lower stochastic dimension can matter even when both feasibility operators are simple tensor operations.

\section{Complete Scaling Tables}\label{app:scaling}
\subsection{Linear equality scaling}\label{app:linear-scaling}
In Table~\ref{tab:linear-scaling}, $\Delta$ is $100(\mathrm{SLP}-\mathrm{Projection})/\mathrm{Projection}$; negative predictive-score values favor SLP. The scaling tables report predictive, feasibility, boundary, and
representation-dimensional behavior only. Runtime comparisons are reported
separately under the controlled profiling protocol in Appendix~\ref{app:profiling}.

\begin{table}[H]
\centering\scriptsize
\resizebox{\linewidth}{!}{%
\begin{tabular}{rrrrrrrrrrr}
\toprule
$d$ & $m$ & $q$ & Dim.red. & $\Delta$MSE & $\Delta$CRPS & $\Delta$ES & VR Proj. & VR \SLPProbHard{} & CEabs Proj. & CEabs \SLPProbHard{} \\
\midrule
10 & 2 & 8 & 20.0\% & +0.46\% & +3.56\% & +2.00\% & 0.0000 & 0.0000 & $2.81\times 10^{-7}$ & $3.55\times 10^{-7}$ \\
11 & 3 & 8 & 27.3\% & +0.71\% & +2.06\% & +1.22\% & 0.0000 & 0.0000 & $4.13\times 10^{-7}$ & $2.63\times 10^{-7}$ \\
13 & 5 & 8 & 38.5\% & -0.56\% & +0.58\% & +0.36\% & 0.0000 & 0.0000 & $3.78\times 10^{-7}$ & $2.19\times 10^{-7}$ \\
19 & 3 & 16 & 15.8\% & +0.03\% & +2.45\% & +1.11\% & 0.0000 & 0.0000 & $5.53\times 10^{-7}$ & $6.11\times 10^{-7}$ \\
21 & 5 & 16 & 23.8\% & -0.57\% & +1.08\% & +0.38\% & 0.0000 & 0.0000 & $4.19\times 10^{-7}$ & $4.00\times 10^{-7}$ \\
25 & 9 & 16 & 36.0\% & -0.19\% & +0.46\% & +0.36\% & 0.0000 & 0.0000 & $4.78\times 10^{-7}$ & $3.91\times 10^{-7}$ \\
37 & 5 & 32 & 13.5\% & +0.95\% & +1.51\% & +1.03\% & 0.0000 & 0.0000 & $9.42\times 10^{-7}$ & $7.75\times 10^{-7}$ \\
41 & 9 & 32 & 22.0\% & +0.02\% & +0.50\% & +0.43\% & 0.0021 & 0.0000 & $9.67\times 10^{-7}$ & $7.95\times 10^{-7}$ \\
49 & 17 & 32 & 34.7\% & -0.37\% & +0.16\% & +0.27\% & 0.0007 & 0.0000 & $6.73\times 10^{-7}$ & $5.34\times 10^{-7}$ \\
73 & 9 & 64 & 12.3\% & -0.38\% & +0.60\% & +0.10\% & 0.0577 & 0.0015 & $1.71\times 10^{-6}$ & $1.60\times 10^{-6}$ \\
81 & 17 & 64 & 21.0\% & -0.16\% & +0.04\% & +0.16\% & 0.1672 & 0.0000 & $1.47\times 10^{-6}$ & $9.36\times 10^{-7}$ \\
97 & 33 & 64 & 34.0\% & -0.10\% & +0.06\% & +0.05\% & 0.3958 & 0.0193 & $1.99\times 10^{-6}$ & $1.23\times 10^{-6}$ \\
\bottomrule
\end{tabular}}
\caption{Twelve held-out linear scaling configurations, seeds 30-34.}
\label{tab:linear-scaling}
\end{table}

\subsection{Ordering scaling}
Positive gain values in Table~\ref{tab:order-scaling} mean lower/better \SLPProbHard{} error or score. Boundary-error columns are absolute differences from the observed tolerance-based tie rate.
\begin{table}[H]
\centering\scriptsize
\resizebox{\linewidth}{!}{%
\begin{tabular}{rrrrrrrrrrr}
\toprule
$d$ & MSE gain & CRPS gain & ES gain & Obs.tie & \SLPProbHard{} tie & Proj.tie & \SLPProbHard{} tie err. & Proj tie err.\\
\midrule
3 & +5.25\% & +0.55\% & +1.14\% & 0.261 & 0.181 & 0.300 & 0.079 & 0.040 \\
5 & +17.78\% & +2.35\% & +3.05\% & 0.251 & 0.235 & 0.401 & 0.017 & 0.149 \\
10 & +11.65\% & +3.91\% & +4.79\% & 0.255 & 0.234 & 0.504 & 0.021 & 0.249 \\
20 & +10.41\% & +3.85\% & +4.48\% & 0.254 & 0.232 & 0.591 & 0.022 & 0.337 \\
40 & +17.75\% & +25.96\% & +23.50\% & 0.253 & 0.241 & 0.891 & 0.012 & 0.638 \\
\bottomrule
\end{tabular}}
\caption{Ordering scaling, seeds 70-74.}
\label{tab:order-scaling}
\end{table}

\subsection{Simplex scaling}
\begin{table}[H]
\centering\scriptsize
\resizebox{\linewidth}{!}{%
\begin{tabular}{rrrrrrrrrrr}
\toprule
$d$ & MSE gain & CRPS gain & ES gain & Obs.zero & \SLPProbHard{} zero & Proj.zero & \SLPProbHard{} zero err. & Proj zero err. \\
\midrule
3 & +0.85\% & +2.80\% & +2.50\% & 0.245 & 0.164 & 0.195 & 0.081 & 0.050 \\
5 & +1.51\% & +2.80\% & +1.87\% & 0.250 & 0.164 & 0.233 & 0.086 & 0.017 \\
10 & +0.82\% & +3.38\% & +1.54\% & 0.250 & 0.253 & 0.325 & 0.003 & 0.075 \\
20 & +2.53\% & +4.47\% & +2.16\% & 0.252 & 0.338 & 0.404 & 0.086 & 0.152 \\
40 & +1.99\% & +4.76\% & +1.82\% & 0.252 & 0.372 & 0.453 & 0.120 & 0.202 \\
\bottomrule
\end{tabular}}
\caption{Simplex scaling, seeds 110-114. Positive gains mean lower \SLPProbHard{} score.}
\label{tab:simplex-scaling}
\end{table}

\section{Extended Discussion and Limitations}
\subsection{The feasible representation is part of the probabilistic model}\label{app:discussion-representation}
The central empirical conclusion is that an \SFLP{} inside \SLPProbHard{} is not merely a constraint-enforcement wrapper. Because $P^Y(B)=P^Z(h^{-1}(B))$, changing $h$ changes support geometry and probability allocation. The weak-order and nonnegative-order maps illustrate this distinction: weak-order cumulative positive-part supports adjacent ties but not a positive anchor-zero atom under a continuous anchor, whereas nonnegative cumulative positive-part gives positive-probability preimages to both. Both constructions are feasible for the constraints they encode, yet they define different predictive-law families.

\paragraph{Hydrological interpretation.} The hydrological experiments are deliberately constructed from one frozen dataset. Nonnegative ordering asks whether every FDC sample lies in the physical order cone; affine coherence asks whether explicitly represented increments agree with adjacent FDC differences; nonlinear scale-shape asks whether increment magnitude and compositional shape satisfy $\Delta=RP$. The underlying data and split are held fixed, with the representation-specific target transformations described in Section 4. This makes the new real-world evidence a test of the central claim that feasible representation is part of probabilistic model design.

\subsection{Dimensionality is family- and map-dependent}
The affine equality case provides the clean intrinsic-coordinate result: $q=d-m$ is both the mathematical dimension of the feasible affine set and the \SLPProbHard{} stochastic dimension. The primary synthetic experiment reduces 11 stochastic coordinates to 8, the circle reduces two ambient coordinates to one structural angle, and the real-world affine FDC representation reduces 13 ambient stochastic coordinates to 7. This experiment is not expected to establish universal score superiority: under diagonal covariance Projection-Or scores better, while richer \SLPProbHard{} covariance closes or reverses that gap without changing feasibility or the $q$-coordinate representation. By contrast, the current weak-order map uses $d$ latents for a $d$-dimensional cone, normalized positive-part boundary-mass simplex uses $d$ computational latents for a $d-1$ dimensional simplex, and the nonlinear real-world scale-shape map uses 8 computational latents for a 7-DOF structural object. The safe general claim is that \SLPProbHard{} \emph{can exploit intrinsic coordinates when the map permits}, not that every \SLPProbHard{} map is lower-dimensional.

\subsection{Boundary capability is not boundary calibration}
Boundary capability does not imply boundary calibration. In the primary $d=5$ simplex experiment, projection's zero rate is closer to the observed rate, whereas SLP is closer on the reported primary ordering tie rate. The map image determines reachability, while the latent probabilities of event preimages determine boundary mass.

\subsection{Covariance expressiveness and structural feasibility are separate design axes}
The covariance ablation shows that richer latent dependence can improve probabilistic scores without altering the structural feasibility guarantee. This supports a modular interpretation of \SLPProbHard{}: the structural map specifies feasible images and coordinates, while the latent law allocates probability through its preimages; together they constitute the \SFLP{}. The diagonal Gaussian is therefore a modeling choice rather than a defining limitation of \SLPProbHard{}. A matched full-covariance projection comparison remains outside this particular ablation.

\subsection{Computation depends on the constraint mechanism and implementation}\label{app:discussion-computation}
Controlled profiling replaces the earlier training-log timing observations. The affine case shows that structural generation need not always make an isolated operator faster, although lower stochastic dimension can still reduce end-to-end workload. Simplex advantages grow with dimension but are moderated by shared neural inference. Ordering shows very large gains under the project's PAVA implementation because cumulative positive-part is a direct tensor operation while isotonic projection processes constrained rows. The correct claim is therefore family- and implementation-dependent computational advantage, not a universal complexity theorem for \SLPProbHard{} versus projection.

\paragraph{Architectural and timing claims.} The primary computational statement is architectural: $11\!\to\!8$ stochastic coordinates is representation-level and hardware independent. Wall-clock timing is secondary evidence. Under the present matched CPU implementation \SLPProbHard{} has lower latency, but optimized affine conditioning is itself inexpensive and other families depend strongly on implementation. Accordingly, all runtime statements are qualified by the exact workload and reference code.

\subsection{Relationship to ProbHardE2E/DPPL}
The completed affine experiment confirms a complementary rather than uniformly ordered relationship. ProbHardE2E/DPPL conditions an ambient Gaussian, while affine \SLPProbHard{} learns the $q=d-m$ structural Gaussian whose pushforward is feasible by construction. On the matched ten-seed benchmark, both are hard feasible at $10^{-5}$; \SLPProbHard{} has lower MSE/MAE and a 27.27\% smaller stochastic representation, whereas DPPL has lower marginal CRPS and better 90\% coverage, and the paired test did not detect an Energy Score difference. This comparison is direct but deliberately scoped: it uses the frozen official source through a verified affine adapter and does not reproduce the authors' PDE application or establish dominance on other constraint classes.

\section{Limitations}
\begin{enumerate}
\item \textbf{Explicit map requirement.} Some feasible sets are easier to project onto than to parameterize, and projection is preferable in those cases.
\item \textbf{Map-dependent coverage and calibration.} A non-surjective map introduces parameterization bias, while even a boundary-capable map does not guarantee correct boundary frequencies. The present experiments do not resolve the source of the observed undercoverage or establish nominal calibration.
\item \textbf{Map-dependent dimensionality.} Only some structural maps achieve intrinsic computational coordinates; normalized positive-part simplex is intentionally redundant by one coordinate.
\item \textbf{Latent distribution remains a modeling choice.} Richer covariance improves \SLPProbHard{} probabilistic scores in the linear ablation, but the present study does not exhaust non-Gaussian or flow-based latent families.
\item \textbf{Nonsmooth and discontinuous maps introduce optimization effects.} The positive-part map is continuous but nonsmooth and has a flat negative region. The normalized positive-part map additionally uses a discontinuous argmax-vertex fallback on the all-inactive region. The latter requires a separate analysis of expected-score gradient estimation.
\item \textbf{Exact feasibility and analytic tractability differ.} Nonlinear pushforward densities or moments may be unavailable even when every sample is feasible.
\item \textbf{Nonlinear-equality coverage is explicit rather than general.} The unit circle and scale-shape FDC supply clean nonlinear equalities, but no arbitrary nonlinear-equality parameterization or higher-dimensional manifold scaling is claimed.
\item \textbf{Controlled timing is implementation-specific.} In particular, the very large ordering gaps reflect the project's CPU PAVA/isotonic implementation, and the nonlinear FDC comparison uses a five-iteration alternating correction; these should not be read as universal speedups over every optimized implementation.
\item \textbf{The hydrological target is model-based reanalysis discharge.} The seven-basin benchmark uses GloFAS v5.0 LISFLOOD-based reanalysis discharge rather than observed-gauge discharge~\citep{grimaldi2026glofas}; it validates real-world constraint structure and distributional behavior rather than observational hydrological skill.
\item \textbf{The direct ProbHardE2E comparison is affine-adapter scoped.} The official repository and released Gaussian conditioning equations are frozen and verified, but the released PDE wrapper does not expose arbitrary $A,b$. We therefore substitute the benchmark hierarchy matrix while retaining the released equations; this is not a reproduction of the original PDE datasets or published runtime.
\item \textbf{Finite precision remains visible.} Affine real-arithmetic guarantees can leave small float32 residuals; the primary float64 diagnostic supports this interpretation for the base case but does not establish the cause of every scaling exceedance.
\end{enumerate}

\section{Reproducibility and Reporting Conventions}
\paragraph{Code and data availability.}
The accompanying arXiv source package contains the manuscript, bibliography, and document styles. Experimental code, per-seed result archives, and processed hydrological data are separate research artifacts; a public repository location is not specified in this version.

All frozen training-result directories were generated on the HPC CPU partition using the validated Python 3.11 / PyTorch CPU environment. The environment check used the exact Conda interpreter, disabled user-site packages, and forced the environment's C++ runtime to avoid system-library contamination. Final jobs used eight allocated CPU threads. Result bundles preserve per-seed CSVs, configuration JSON files, histories, aggregate summaries, paper tables, checkpoints where enabled, and the float64 diagnostic.

Continuous feasibility residuals remain primary. Thresholded violation rates at $10^{-5}$ are finite-precision diagnostics. Ordering tie and simplex zero rates are distributional boundary diagnostics and are not conflated with violation metrics. The real-world affine and nonlinear extensions reuse the same frozen source artifact and preserve its SHA-256 in their run tables; raw-$Q$ metrics are primary for hydrological interpretation. The official-source DPPL affine-adapter comparison preserves the upstream Git SHA and source hashes, verifies the generic adapter against a direct transcription of the released equations, reuses the frozen \SLPProbHard{} rows without retraining, and profiles only after the predictive primary is frozen. Paired $p$-values are unadjusted and interpreted alongside effect sizes.

\section{Scope of Empirical Claims}
The evidence supports \SLPProbHard{} for constraint classes admitting suitable explicit \SFLP{}s: direct feasible generation, exact affine free coordinates, representation-dependent boundary probability, one-coordinate circular generation, real-world affine and nonlinear coherence, composed nonnegative/ordered discharge generation, covariance-head modularity, and family-dependent computational differences under the stated implementations. The official-source ProbHardE2E evidence is a direct ten-seed comparison on one compatible affine benchmark; it supports the reported representation trade-off there but does not establish superiority across ProbHardE2E's nonlinear or application-specific constraint classes. Generality remains map dependent, and arbitrary nonlinear equalities still require suitable explicit maps or other enforcement mechanisms.

\section{Official ProbHardE2E/DPPL Affine Comparison Audit}\label{app:dppl-audit}

\subsection{Frozen source provenance and adapter scope}\label{app:dppl-source}
The direct comparator uses the official \texttt{amazon-science/probharde2e} repository~\citep{utkarsh2025,amazonprobharde2e2026} frozen at Git commit
\begin{center}
\texttt{ff787718e79e5ac6b673807d40ce1cddb0fd17c2}
\end{center}
(commit date 2026-03-07, branch \texttt{main}). The worktree was clean. SHA-256 hashes were recorded for \texttt{probconserv.py}, \texttt{ProbHardE2E\_Linear.ipynb}, \texttt{README.md}, and \texttt{LICENSE}; the run also checked for the released hard-linear equation fragments and the linear-notebook calls to the probabilistic conservation layer.

The released wrapper~\citep{utkarsh2025,amazonprobharde2e2026} constructs its PDE-specific conservation matrix $g$ internally and does not expose the arbitrary affine interface needed for the hierarchy benchmark. The adapter therefore makes one problem-interface substitution,
\[
g_{\mathrm{PDE}}\longrightarrow A_{\mathrm{benchmark}},
\]
while retaining the released Gaussian conditioning equations. It is consequently described throughout as \emph{ProbHardE2E/DPPL (official-source affine adapter)}, not as an untouched call to the original PDE wrapper.

The frozen comparison record identifies the affine-adapter source in the historical v24 bundle as \texttt{frozen\_dppl\_evidence/dppl\_official\_adapter.py}. Its recorded SHA-256 is
\begin{center}
\texttt{442cdbbe3dbd58ef0a8abbe1824baf09108a72e0ab258dfffc8438feb6f10087},
\end{center}
The historical archive path and recorded hash document provenance; they do not establish public availability. The adapter file and its machine-readable provenance record are separate research artifacts and are not included in the accompanying manuscript-source package.
This provenance description does not alter or rerun the frozen predictive experiment.

\subsection{Affine DPPL equations and coherent sampling}\label{app:dppl-equations}
For an ambient diagonal Gaussian $\widetilde Y\mid x\sim\mathcal N(\mu,\Sigma)$ with affine equality $AY=b$, the verified adapter retains the released ProbHardE2E/DPPL hard-linear conditioning equations~\citep{utkarsh2025,amazonprobharde2e2026}:
\begin{align}
S &= A\Sigma A^\top,\qquad
K=\Sigma A^\top S^{-1},\\
\mu_c &= \mu-K(A\mu-b),\\
\Sigma_c &= \Sigma-KA\Sigma.
\end{align}
Because the common objective and evaluation use sample-based CRPS and Energy Score, independent sampling from only the projected marginal variances would discard the induced cross-coordinate covariance. Samples are therefore generated by the equivalent reparameterization
\begin{equation}
Y=\widetilde Y-K(A\widetilde Y-b),
\qquad \widetilde Y\sim\mathcal N(\mu,\Sigma),
\end{equation}
which preserves the singular conditioned joint Gaussian in exact arithmetic. The constraint solve is performed in float64 and returned to the model dtype, matching the released source's double-precision constraint path.

\subsection{Equation-level equivalence and finite-precision audit}
Before primary evaluation, the generic affine adapter was compared with a direct transcription of the released hard-linear equations using the same benchmark $A,b$. Table~\ref{tab:dppl-equivalence} reports the frozen check.
\begin{table}[H]
\centering\small
\begin{tabular}{lr}
\toprule
Audit quantity & Result\\
\midrule
Adapter vs.\ direct-reference mean, max abs.\ diff. & $0$\\
Adapter vs.\ direct-reference covariance, max abs.\ diff. & $0$\\
Conditioned-mean constraint max abs.\ residual & $1.19\times10^{-7}$\\
$A\Sigma_c$ max abs.\ residual & $4.84\times10^{-8}$\\
Float64 sample constraint max abs.\ residual & $2.66\times10^{-15}$\\
Float32 sample CE$_{\mathrm{abs}}$ & $4.60\times10^{-8}$\\
Float32 sample CE$_{\max}$ & $5.36\times10^{-7}$\\
Float32 VR$_{10^{-5}}$ & $0$\\
\bottomrule
\end{tabular}
\caption{Frozen equivalence and feasibility audit for the official-source affine adapter.}
\label{tab:dppl-equivalence}
\end{table}

\subsection{Pre-specified protocol and execution audit}\label{app:dppl-freeze}
Before inspecting official-source DPPL affine-adapter test results, the direct-comparator protocol froze the existing affine task, seed registry 20-29, common metrics, float32 evaluation with 100 predictive samples, and the rule that existing \SLPProbHard{} results would not be retrained or retuned after comparator inspection. The executed benchmark uses 12 inputs, 11 outputs, rank-3 affine constraints, and intrinsic dimension 8, with $3000/750/750$ train/validation/test samples; both methods use the same width-128, depth-3 backbone, batch size 128, 60-epoch cap, learning rate $10^{-3}$, weight decay $10^{-5}$, gradient clip 5, 12/32 train/validation Monte Carlo samples, and 0.5 CRPS + 0.5 Energy Score objective.

The execution audit confirms all ten primary seeds, DPPL stochastic dimension 11 versus \SLPProbHard{} dimension 8, and no \SLPProbHard{} retraining. The frozen \SLPProbHard{} result file retained the same SHA-256 before and after the DPPL job. Seed-specific train/validation/test tensor hashes were stored for every DPPL seed, and profiling was run only after the predictive primary had been frozen.

\subsection{Full direct-comparison results}
Table~\ref{tab:dppl-full} gives the complete primary means and standard deviations. Both methods have zero $10^{-5}$ violations. DPPL's smaller float32 residuals and SLP's larger ones are both far below the diagnostic threshold and are not interpreted as a substantive feasibility difference.
\begin{table}[H]
\centering\small
\resizebox{\linewidth}{!}{%
\begin{tabular}{lrr}
\toprule
Metric & DPPL source adapter & \SLPProbHard{}\\
\midrule
MSE & $0.29968\pm0.00797$ & $\mathbf{0.29574\pm0.00614}$\\
MAE & $0.39466\pm0.00479$ & $\mathbf{0.39295\pm0.00360}$\\
Marginal CRPS & $\mathbf{0.28676\pm0.00367}$ & $0.29012\pm0.00277$\\
Energy Score & $1.21896\pm0.01550$ & $1.22158\pm0.01228$\\
Coverage 90 & $\mathbf{0.8093\pm0.0096}$ & $0.7856\pm0.0078$\\
Interval width 90 & $1.2967\pm0.0358$ & $\mathbf{1.2637\pm0.0232}$\\
CE$_{\mathrm{abs}}$ & $6.55\times10^{-8}$ & $2.60\times10^{-7}$\\
CE$_{\max}$ & $9.54\times10^{-7}$ & $2.59\times10^{-6}$\\
VR$_{10^{-5}}$ & $0$ & $0$\\
Stochastic coordinates & $11$ & $\mathbf{8}$\\
Trainable parameters & $37{,}526$ & $\mathbf{36{,}752}$\\
Primary median inference latency (ms) & $3.222\pm0.136$ & $\mathbf{1.647\pm0.191}$\\
\bottomrule
\end{tabular}}
\caption{Official-source ProbHardE2E/DPPL affine-adapter comparison versus the already-frozen \SLPProbHard{} primary, seeds 20-29. Bold marks lower error/score, coverage closer to the nominal 0.90 level, smaller coordinate/parameter counts, and lower measured latency. Interval width is a sharpness diagnostic and must be interpreted jointly with coverage; latency is implementation-specific.}
\label{tab:dppl-full}
\end{table}

\subsection{Paired statistical interpretation}
Table~\ref{tab:dppl-paired} reports \SLPProbHard{} minus DPPL paired differences. MSE and MAE favor SLP; marginal CRPS and coverage favor DPPL; the paired test did not detect an Energy Score difference. The interval result is a sharpness-coverage trade-off rather than an unconditional improvement.
\begin{table}[H]
\centering\scriptsize
\resizebox{\linewidth}{!}{%
\begin{tabular}{lrrrrrr}
\toprule
Metric & DPPL mean & \SLPProbHard{} mean & SLP$-$DPPL & 95\% CI & paired $t$ $p$ & Wilcoxon $p$\\
\midrule
MSE & 0.29968 & 0.29574 & $-0.00395$ & $[-0.00653,-0.00137]$ & 0.0072 & 0.0098\\
MAE & 0.39466 & 0.39295 & $-0.00171$ & $[-0.00328,-0.00015]$ & 0.0353 & 0.0645\\
CRPS & 0.28676 & 0.29012 & $+0.00336$ & $[0.00198,0.00474]$ & $3.76\times10^{-4}$ & 0.0020\\
Energy Score & 1.21896 & 1.22158 & $+0.00263$ & $[-0.00397,0.00922]$ & 0.3913 & 0.5566\\
Coverage 90 & 0.8093 & 0.7856 & $-0.02367$ & $[-0.03246,-0.01488]$ & $1.81\times10^{-4}$ & 0.0039\\
Width 90 & 1.2967 & 1.2637 & $-0.03302$ & $[-0.06293,-0.00310]$ & 0.0341 & 0.0371\\
CE$_{\mathrm{abs}}$ & $6.55\times10^{-8}$ & $2.60\times10^{-7}$ & $+1.94\times10^{-7}$ & $[1.86,2.03]\times10^{-7}$ & $1.93\times10^{-12}$ & 0.0020\\
CE$_{\max}$ & $9.54\times10^{-7}$ & $2.59\times10^{-6}$ & $+1.63\times10^{-6}$ & $[1.42,1.85]\times10^{-6}$ & $3.16\times10^{-8}$ & 0.0020\\
\bottomrule
\end{tabular}}
\caption{Paired direct-comparator statistics across the same ten seeds. $p$-values are descriptive and unadjusted. Residual differences are numerical-scale effects: both methods have VR$_{10^{-5}}=0$.}
\label{tab:dppl-paired}
\end{table}

\subsection{Stochastic dimension, covariance family, and computation}
The direct comparison makes the representation difference explicit. DPPL starts from an 11-coordinate ambient diagonal Gaussian and conditions it onto the rank-3 affine set. \SLPProbHard{} instead parameterizes the 8 free coordinates directly. Hence the structural representation removes
\[
100\left(1-\frac{8}{11}\right)=27.27\%
\]
of stochastic coordinates before feasibility is considered. These two models need not induce the same covariance family on the feasible subspace: DPPL's covariance weighting can exploit ambient marginal variances, whereas diagonal affine \SLPProbHard{} induces covariance through $N\Sigma_ZN^\top$. This distinction is consistent with the observed DPPL marginal-CRPS advantage and the separate finding that richer \SLPProbHard{} covariance heads improve CRPS/ES without changing feasibility.

The primary run's matched instrumentation gives median inference latency $3.222\pm0.136$ ms for DPPL versus $1.647\pm0.191$ ms for SLP. A separate DPPL-only frozen profile, performed after the predictive primary and without retraining, uses batch 128, 100 samples, eight CPU threads, 20 warmups, and 200 repeats; its per-seed median averages $3.115\pm0.116$ ms. These values describe the present adapter/\SLPProbHard{} implementations and hardware only; they are not reproductions of the runtime reported by the original ProbHardE2E paper.

\subsection{What the direct comparison does and does not establish}
The result closes the compatible affine-comparator gap: \SLPProbHard{} remains practically hard feasible against the verified official-source ProbHardE2E affine adapter while using fewer stochastic coordinates; predictive advantages are metric dependent. It does \emph{not} show that \SLPProbHard{} dominates ProbHardE2E across all metrics or constraint classes. DPPL has the better marginal CRPS and higher, closer-to-nominal coverage here; \SLPProbHard{} has better MSE/MAE and a smaller/sharper representation; the paired test did not detect an Energy Score difference. Nor is the experiment a reproduction of ProbHardE2E's original PDE application. The supported conclusion is a representation-dependent trade-off on a shared affine benchmark.

\end{document}